\documentclass[10pt]{article}

\usepackage{mystyle}
\RequirePackage[top=2.25cm, bottom=2.5cm, left=2.5cm, right=2.5cm, columnsep=0.65cm]{geometry}
\usepackage{float}
\usepackage{tikz}

\title{\LARGE Toward the Optimal Regret--Instability Trade-off in Multi-Armed Bandits}
\author{
Kaifei Wang\thanks{Peking University. Email: \texttt{wkf5094@stu.pku.edu.cn}.}
\and
Yinyu Ye\thanks{Stanford University. Email: \texttt{yinyu-ye@stanford.edu}.} \thanks{Shanghai Jiao Tong University. Email: \texttt{han.zhong@sjtu.edu.cn}.}
\and
Han Zhong\footnotemark[3]
}
\date{}

\begin{document}
\maketitle

\begin{abstract}
Multi-armed bandit algorithms are evaluated by regret, yet comparable regret can coexist with different allocations across independent runs. We study the trade-off between worst-case regret $\mathcal R_{K,T}$ and instability $\mathcal S_{K,T}$, defined as the largest standard deviation of a terminal pull count, for $K$ arms and $T$ rounds. We prove the finite-time lower bound $\mathcal R_{K,T}\mathcal S_{K,T}\ge C T^{3/2}$, where $C$ is independent of $K$ and $T$, under a finite-time regret condition and without the regularity assumptions imposed in the prior asymptotic analysis. We also introduce Stabilized Lower-Envelope UCB (\textup{\textsc{SLE-UCB}}), a new tunable algorithm combining a running lower-envelope index with a decreasing pull-count stabilizer. \textup{\textsc{SLE-UCB}} satisfies $\mathcal R_{K,T}\mathcal S_{K,T}=O(T^{3/2}\log K)$, with an implicit constant independent of $K$ and $T$, matching the lower bound exactly in $T$ and within a logarithmic factor in $K$. To prove the instability bound, we develop a new offline top-prefix representation that removes path dependence from online decisions. Together with single-reward perturbations and the Efron--Stein inequality, this representation controls pull-count variance. Thus, regret and instability depend reciprocally on $K$, while their product has no polynomial dependence on $K$. These results resolve the open question raised in the literature concerning the sharp arm-dependent regret--instability frontier.
\end{abstract}

\section{Introduction}

Learning and decision-making often occur simultaneously. In clinical trials, online recommendation, and sequential resource allocation, each opportunity must be assigned while the rewards of the available alternatives remain unknown. The multi-armed bandit model captures these problems in its simplest form: every action produces an immediate reward while also generating information for future decisions \citep{li2010,gittins2011,villar2015}. The central challenge is to balance information acquisition with immediate reward. Regret has become the standard measure of this balance, comparing the reward collected by an algorithm with that attainable if the best arm were known in advance \citep{lattimore2020}.

Regret, however, is an average criterion. It reflects pull counts through their expectations but does not control how realized rewards and pull counts vary from one run to another. Two algorithms can therefore have similar regret while producing very different allocations across independent runs. This distinction matters because pull counts determine how many patients receive each treatment, how much traffic is assigned to each option, and how much data are available for subsequent inference.

A related concern arises at the level of realized performance. For example, in discussing MOSS \citep{audibert2009minimax}, \citet[Section~9.2]{lattimore2020} use a carefully tuned two-arm policy to illustrate how favorable average performance can coexist with markedly different outcomes across independent runs. Subsequent work formalizes this issue through regret tails and risk-sensitive policy design \citep{simchilevi2022simple,simchilevi2023tradeoff,fan2025fragility,zhu2026varianceinflation}. These studies focus on reward outcomes rather than arm allocations, but they reinforce the same message: regret alone does not control an algorithm's behavior across independent runs.

The allocation side of this issue is studied through sampling stability, rooted in \citet{lai1982least}. Broadly, stability requires the number of pulls of each arm to concentrate around a deterministic scale, thereby supporting conventional statistical inference after adaptive data collection. This property has been examined for UCB \citep{kalvit2021closer,khamaru2024inference,han2024ucb,chenlu2025adaptivity,fan2026precise} and Thompson sampling and its variants \citep{kalvit2021closer,fan2022typical,halder2025stable,han2026thompson,yan2026optimism}. Related experimental-design work studies the trade-off between online performance and statistical power \citep{simchilevi2023experimental}. These studies make clear that predictable allocation is not an automatic consequence of low regret; it is a separate property of the algorithm.

This longstanding literature raises a more structural question: can an algorithm be both regret-optimal and stable? \citet{praharaj2025instability} show that a broad class of minimax-optimal optimism-based algorithms fails to satisfy sampling stability. Subsequent work studies this question under complementary notions of stability. \citet{halder2026stabilizingbanditsusingregularization} study stabilization through regularization under a related stability notion. Taken together, these studies reveal a general tension between regret and allocation stability. \citet{chenlu2026} provide a quantitative formulation of this question by studying the Pareto trade-off between worst-case regret and pull-count instability. Specifically, for a \(K\)-armed bandit run for \(T\) rounds, let \(\mathcal R_{K,T}\) denote its worst-case regret and let \(\mathcal S_{K,T}\) denote its worst-case instability, measured by the largest standard deviation of an arm's terminal pull count. Formal definitions are provided in Section~\ref{sec:model_metrics}. Their lower bound applies to algorithms that treat identical arms symmetrically, pull every arm at least once in expectation, and achieve asymptotically sublinear worst-case regret. For every fixed \(K\), their result implies that there exists a constant \(C_0>0\), independent of \(T\), such that\footnote{Although the lower-bound construction of \citet{chenlu2026} uses only two arms, its asymptotic conclusion extends to every fixed \(K\) through the padding reduction in Remark~\ref{rem:fixed_k_reduction}.}
\begin{equation}
\label{eq:chen_lu_fixed_k_implication}
\liminf_{T\to\infty}
\frac{\mathcal R_{K,T}\mathcal S_{K,T}}{T^{3/2}}
> C_0.
\end{equation}
On the achievability side, they introduce \textup{\textsc{UCB-f}}, a tunable generalization of UCB1 \citep{auer2002finite} controlled by an exploration function \(f(\cdot)\), and establish
\begin{equation*}
\mathcal R_{K,T}=O_K\left(\sqrt{T}f(T)\right),
\qquad
\mathcal S_{K,T}=O_K\left(\frac{T\log T}{f(T)}\right).
\end{equation*}
Here \(O_K(\cdot)\) hides the dependence on \(K\). 
Multiplying these bounds gives a regret--instability product of \(O_K(T^{3/2}\log T)\). With the dependence on \(K\) left implicit, the upper bound matches the \(T^{3/2}\) polynomial dependence of the lower bound above but leaves a factor \(\log T\) gap. To further understand the \emph{optimal regret--instability trade-off} in \(K\)-armed bandits, three questions remain open: (i) whether the logarithmic gap in \(T\) can be closed, (ii) what the explicit and sharp dependence on \(K\) is, and (iii) whether a tunable algorithm can achieve this trade-off.

\vspace{4pt}
\noindent\textbf{Contributions.}
Our contributions are fourfold.
\begin{itemize}[leftmargin=*]
  \item \textbf{A finite-time lower bound.}
  Building on the asymptotic lower bound of \citet{chenlu2026}, we establish a finite-time counterpart under a mild condition that permits linear-in-\(T\) regret. Specifically, we prove
  \[
  \mathcal R_{K,T}\mathcal S_{K,T}\ge C T^{3/2},
  \]
  where \(C>0\) is independent of \(T\) and \(K\). Our proof adapts their argument without separately assuming symmetry, nondegenerate exploration, or asymptotic active learning. See Remark~\ref{rem:weaker_conditions} for details.

  \item \textbf{A new stabilized UCB design.} We introduce Stabilized Lower-Envelope UCB (\textup{\textsc{SLE-UCB}}), which combines two distinctive design ingredients: a running lower-envelope UCB index and a decreasing pull-count stabilizer. The lower envelope makes the reward-dependent component of the index monotone, while the stabilizer creates a deterministic gap between different pull counts. The stabilizer strength is tunable, allowing the algorithm to trade regret for lower allocation variability.

  \item \textbf{A sharp \(K\)-dependent frontier.}
  For \textup{\textsc{SLE-UCB}}, we establish regret and instability bounds with \(\sqrt K\) and \(1/\sqrt K\) dependence, respectively, up to logarithmic factors. Multiplying these bounds gives
  \[
  \mathcal R_{K,T}(\alpha)\mathcal S_{K,T}(\alpha)=O\left(T^{3/2}\log K\right).
  \]
  Together with the lower bound, this provides a near-optimal characterization of the regret--instability Pareto frontier for \(K\)-armed bandits, as illustrated in Figure~\ref{fig:pareto_frontier}. By varying \(\alpha\), \textup{\textsc{SLE-UCB}} traces this frontier, matching the lower bound exactly in \(T\) and up to a factor \(\log K\) in \(K\). This broader characterization both removes the \(\log T\) gap left open by \citet{chenlu2026} and settles their open question about the \(K\)-dependence to logarithmic accuracy.

  \item \textbf{A new instability analysis.} We develop an offline top-prefix representation of the online decisions made by \textup{\textsc{SLE-UCB}}. This representation removes the path dependence created by sequential arm selection and converts a single-reward perturbation into a controlled change in terminal pull counts. Combining this perturbation argument with the Efron--Stein inequality \citep{efron1981jackknife,boucheron2013concentration} yields the finite-time instability bound. To the best of our knowledge, this combination has not previously been used to control terminal pull-count variance in bandit analysis. The approach may also be of independent interest for studying random sample sizes generated by other adaptive allocation rules.
\end{itemize}

\begin{figure}[t]
\centering
\begingroup
\definecolor{frontierblue}{RGB}{0,92,155}
\definecolor{frontierorange}{RGB}{190,72,0}
\scalebox{0.88}{%
\begin{tikzpicture}[x=1.45cm,y=.90cm,>=latex,font=\footnotesize,line cap=round,
  orangebox/.style={draw=frontierorange,fill=white,rounded corners=1.2pt,
    line width=.8pt,inner xsep=3pt,inner ysep=2.5pt,font=\scriptsize,
    text=frontierorange,align=left},
  bluebox/.style={draw=frontierblue,fill=white,rounded corners=1.2pt,
    line width=.8pt,inner xsep=3pt,inner ysep=2pt,font=\scriptsize,
    text=frontierblue,align=left}]
  \draw[->,line width=.8pt] (.70,.55)--(8.35,.55)
    node[right=1pt] {$\mathcal R_{K,T}$};
  \draw[->,line width=.8pt] (.70,.55)--(.70,5.95)
    node[above=1pt] {$\mathcal S_{K,T}$};

  \coordinate (leftpoint) at (1.55,5.212);
  \coordinate (rightpoint) at (7.20,1.263);
  \coordinate (slepoint) at (3.90,{7.80/3.90+.18});
  \node[anchor=north] at (1.55,.47) {$\sqrt{KT}$};
  \node[anchor=north] at (7.20,.47) {$T$};
  \node[anchor=e] at (.62,5.212) {$T/\sqrt K$};
  \node[anchor=e] at (.62,1.263) {$\sqrt T$};

  \path[fill=black!5]
    (1.55,.58)--(7.00,.58)
    -- plot[domain=7.00:1.55,samples=90] (\x,{7.20/\x+.15}) -- cycle;

  \draw[frontierblue,line width=1.15pt]
    plot[domain=1.55:7.00,samples=100] (\x,{7.20/\x+.15});
  \draw[frontierorange,line width=1.15pt,dashed]
    plot[domain=1.55:7.20,samples=100] (\x,{7.80/\x+.18});

  \draw[frontierorange,densely dashed,line width=.48pt]
    (leftpoint)--(1.55,.55);
  \draw[frontierorange,densely dashed,line width=.48pt]
    (leftpoint)--(.70,5.212);
  \draw[frontierorange,densely dashed,line width=.48pt]
    (rightpoint)--(7.20,.55);
  \draw[frontierorange,densely dashed,line width=.48pt]
    (rightpoint)--(.70,1.263);
  
  \fill[frontierorange] (leftpoint) circle (1.55pt);
  \node[anchor=south west,font=\scriptsize,text=frontierorange]
    at (1.60,5.25) {$\alpha=1$};

  \node[orangebox]
    (slelabel) at (3.90,4.50)
    {\textup{\textsc{SLE-UCB}} upper bounds\\[1pt]
    $\mathcal R_{K,T}(\alpha)\mathcal S_{K,T}(\alpha)= O\Big( T^{3/2}\log K\Big)$};
  \draw[->,frontierorange,line width=.7pt]
    (slelabel.south)--([yshift=4pt]slepoint);
  \fill[white] (slepoint) circle (3.8pt);
  \draw[frontierorange,fill=white,line width=1pt]
    (slepoint) circle (3pt);
  \fill[frontierorange] (slepoint) circle (1.45pt);

  \node[bluebox,anchor=west]
    (lowerlabel) at (2.60,.90)
    {Lower bound:
     $\mathcal R_{K,T}\mathcal S_{K,T}=\Omega(T^{3/2})$};
  \draw[->,frontierblue,line width=.7pt]
    (lowerlabel.north)--++(0,0.7);
  \node[align=left,font=\scriptsize,text=black!58,fill=black!5,inner sep=1.5pt]
    at (2.40,2.00) {excluded by\\the lower bound};

  \draw[->,frontierorange,line width=.75pt]
    plot[domain=4.20:5.80,samples=36] (\x,{7.80/\x+.48});
  \node[font=\scriptsize,text=frontierorange,inner sep=1pt]
    at (5.1,2.40) {increasing $\alpha$};
\end{tikzpicture}
}%
\endgroup
\caption{Schematic \(K\)-dependent regret--instability product comparison, with logarithmic factors and constants suppressed in the axis annotations. The solid blue and dashed orange curves denote the lower bound and the \textup{\textsc{SLE-UCB}} upper bound, respectively. In the regret regime covered by Theorem~\ref{thm:lower_bound}, the gray region is excluded and the product bounds differ by at most a factor \(\log K\).}
\label{fig:pareto_frontier}
\end{figure}

\noindent\textbf{Organization.} Section~\ref{sec:model_metrics} introduces the bandit model and performance criteria. Section~\ref{sec:fundamental_limit} presents the lower bound, the \textup{\textsc{SLE-UCB}} algorithm, and the resulting frontier comparison. Section~\ref{sec:concentration} establishes the clipping-bias and concentration results. Sections~\ref{sec:regret} and~\ref{sec:instability} provide the regret and instability analyses, respectively. Section~\ref{sec:lower_bound_comparison} proves the \(K\)-armed lower bound, and Section~\ref{sec:conclusion} concludes.

\section{Model and performance metrics}
\label{sec:model_metrics}
\noindent\textbf{Multi-armed bandits.} Consider a stochastic multi-armed bandit with $K\ge2$ arms. For each arm $i\in[K]$, let $\{X_{i,n}\}_{n\ge1}$ be an i.i.d. reward sequence with mean $\mu_i$, where $X_{i,n}$ is the reward revealed on the $n$-th pull of arm $i$. These reward sequences are mutually independent across arms. Let $\mu^\star:=\max_{i\in[K]}\mu_i$ denote the optimal mean reward, and let $\nu=(\nu_1,\ldots,\nu_K)$ denote the bandit instance, where $\nu_i$ is the distribution of $X_{i,1}$.
We make the following assumption on the reward distributions.
\begin{assumption}[{Uniformly bounded means and sub-Gaussian rewards}]
\label{assumption:bounded_mean}
There is a constant $M>0$ such that $\mu_i\in[-M,M]$ for every arm $i\in[K]$ and there is a constant $\sigma>0$ such that {the reward distribution of each arm is} $\sigma$-sub-Gaussian, in the sense that
for all $u\in\mathbb R$ and $i\in[K]$,
\begin{equation*}
\mathbb{E}\!\left[\exp\left(u(X_{i,1}-\mu_i)\right)\right]
\le
\exp\left(\frac{\sigma^2u^2}{2}\right).
\end{equation*}
\end{assumption}

\noindent\textbf{Online learning and objective.} The reward distributions are unknown, and the algorithm learns online from bandit feedback. Fix a number of rounds $T>K$. In each round $t\in[T]$, the algorithm selects an arm $A_t\in[K]$ based on the past and its internal randomization, and observes only the selected arm's reward. For every $i\in[K]$ and $t\in[T]$, define
\begin{equation*}
N_i(t):=\sum_{s=1}^t \mathbf{1}\{A_s=i\},
\qquad N_i(0):=0.
\end{equation*}
Thus, the reward observed at time $t$ is $X_{A_t,N_{A_t}(t)}$.
We evaluate an algorithm along two dimensions: regret and instability. Regret captures learning performance, whereas instability, following \citet{chenlu2026}, captures the variability of pull counts.
\begin{definition}[Regret and instability]
\label{def:regret_instability}
For a given instance $\nu$ and a given algorithm, regret and instability with $K$ arms over $T$ rounds are defined as
\begin{equation*}
\mathcal{R}_{K,T}(\nu) = \sum_{t=1}^T \mathbb E\left[\mu^\star-\mu_{A_t}\right], \quad \mathcal{S}_{K,T}(\nu) = \max_{i\in[K]}\sqrt{\operatorname{Var}[N_i(T)]}.
\end{equation*}
\end{definition}
The regret $\mathcal{R}_{K,T}(\nu)$ measures the expected cumulative reward loss compared to the optimal arm. The instability $\mathcal{S}_{K,T}(\nu)$ is the largest standard deviation of an arm's pull count across independent runs. Smaller instability therefore means that the algorithm produces more reproducible allocations across independent runs.
We evaluate a bandit algorithm by its worst-case performance over the full class \(\mathcal E\) of \(K\)-armed instances satisfying Assumption~\ref{assumption:bounded_mean}.
\begin{definition}[Worst-case regret and instability]
\label{def:worst_case_regret_instability}
For a given algorithm, the worst-case regret and instability with $K$ arms over $T$ rounds are defined as
\begin{equation*}
\mathcal{R}_{K,T}(\mathcal{E})=\sup_{\nu\in\mathcal{E}} \mathcal{R}_{K,T}(\nu),
\qquad
\mathcal{S}_{K,T}(\mathcal{E})=\sup_{\nu\in\mathcal{E}} \mathcal{S}_{K,T}(\nu).
\end{equation*}
\end{definition}
We henceforth omit the instance-class $\mathcal{E}$ and simply write $\mathcal{R}_{K,T}$ and $\mathcal{S}_{K,T}$ for these worst-case quantities.

\section{Fundamental limit and Stabilized Lower-Envelope UCB}
\label{sec:fundamental_limit}

In this section, we characterize the regret--instability trade-off for \(K\)-armed bandits, including its dependence on both the number of rounds \(T\) and the number of arms \(K\). We first establish a lower bound and then present an achievability result that matches it exactly in \(T\) and up to a logarithmic factor in \(K\).

\subsection{\texorpdfstring{\(K\)-armed lower bound}{K-armed lower bound}}
While \citet{chenlu2026} establish the \(T^{3/2}\) regret--instability lower bound only asymptotically, under symmetry, nondegenerate exploration, and an asymptotic active-learning condition, the following theorem holds for every \(K\ge2\) and \(T>K\). It requires the regret condition \(\mathcal R_{K,T}\le MT/8\), without separately imposing any of those three conditions.

\begin{theorem}[\(K\)-armed regret--instability lower bound]
\label{thm:lower_bound}
Fix \(K\ge2\) and \(T>K\). Let \(\mathcal R_{K,T}\) and \(\mathcal S_{K,T}\) denote the worst-case regret and instability as in Definition~\ref{def:worst_case_regret_instability}. For the class of $K$-armed instances satisfying Assumption~\ref{assumption:bounded_mean}, every algorithm whose worst-case regret satisfies \(\mathcal R_{K,T}\le MT/8\) obeys
\begin{equation}
\begin{aligned}
\mathcal R_{K,T}\mathcal S_{K,T}
&\geq C T^{3/2}.
\end{aligned}
\label{eq:lower_bound}
\end{equation}
Here \(C=C(M,\sigma)>0\) depends only on the instance-class parameters $(M, \sigma)$ and is independent of $(T, K)$.
\end{theorem}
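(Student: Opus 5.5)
The plan is a two-point change-of-measure argument, in the spirit of \citet{chenlu2026}. The one new ingredient is a Cauchy--Schwarz step that bounds the shift of \(\mathbb E[N_1(T)]\) under a small mean perturbation by the standard deviation of \(N_1(T)\) under the base instance. That standard deviation is at most \(\mathcal S_{K,T}\), and this is exactly where instability enters.

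\textbf{Instances.} All instances use Gaussian rewards with variance \(\sigma^2\), so they are \(\sigma\)-sub-Gaussian. Arms \(3,\dots,K\) have mean \(-M\), and \(\Delta:=\min\{M,\sigma/\sqrt T\}\).
\begin{itemize}
\item Base instance \(\nu^0\): arms \(1,2\) have mean \(0\).
\item \(\nu^{+}\): arm \(1\) has mean \(\Delta\) and arm \(2\) has mean \(0\).
\item \(\nu^{-}\): arm \(2\) has mean \(\Delta\) and arm \(1\) has mean \(0\).
\end{itemize}
All three lie in \(\mathcal E\). In \(\nu^{+}\) every pull of an arm other than \(1\) costs at least \(\Delta\), so \(\mathcal R_{K,T}\ge \Delta\,(T-\mathbb E_{+}[N_1(T)])\). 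Symmetrically, \(\mathcal R_{K,T}\ge \Delta\,(T-\mathbb E_{-}[N_2(T)])\).

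\textbf{Change of measure.} Let \(L\) be the likelihood ratio \(d\mathbb P_{+}/d\mathbb P_0\) on the full history, including internal randomization. Only rewards of arm \(1\) contribute to \(L\). Then
\[
\mathbb E_{+}[N_1(T)]-\mathbb E_0[N_1(T)]=\mathbb E_0\big[(N_1(T)-\mathbb E_0N_1(T))(L-1)\big]\le \mathcal S_{K,T}\sqrt{\mathbb E_0[L^2]-1}.
\]
For the Gaussian shift, \(\mathbb E_0[L^2]=\mathbb E_0[\exp(\Delta^2N_1(T)/\sigma^2)]\). This follows by conditioning sequentially on each reward of arm \(1\): each factor contributes \(e^{\Delta^2/\sigma^2}\), and the number of factors is the stopping-time-like count \(N_1(T)\). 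Justifying this martingale computation cleanly is the technical step I would check most carefully. Since \(N_1(T)\le T\) and \(\Delta^2T/\sigma^2\le 1\), we get \(\mathbb E_0[L^2]-1\le e-1\).

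Hence \(\mathbb E_{+}N_1\le \mathbb E_0N_1+c\,\mathcal S_{K,T}\) with \(c=\sqrt{e-1}\), and likewise \(\mathbb E_{-}N_2\le \mathbb E_0N_2+c\,\mathcal S_{K,T}\). Adding the two regret inequalities and using \(\mathbb E_0N_1+\mathbb E_0N_2\le T\) gives
\[
2\mathcal R_{K,T}\ \ge\ \Delta\big(T-2c\,\mathcal S_{K,T}\big).
\]

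\textbf{Case split.}
\begin{itemize}
\item If \(\mathcal S_{K,T}\le T/(4c)\), then \(\mathcal R_{K,T}\ge \Delta T/4\ge \tfrac14\min\{M,\sigma\}\sqrt T\), using \(T\ge1\). Therefore \(\mathcal R_{K,T}\mathcal S_{K,T}\ge \tfrac14\min\{M,\sigma\}\sqrt T\,\mathcal S_{K,T}\).
\item If \(\mathcal S_{K,T}> T/(4c)\), it suffices to show \(\mathcal R_{K,T}\ge c'\sqrt T\).
\end{itemize}
So both cases reduce to two facts: a lower bound \(\mathcal S_{K,T}\gtrsim\sqrt T\), and a minimax-type bound \(\mathcal R_{K,T}\gtrsim\sqrt T\). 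This is the main obstacle.

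\textbf{Regret lower bound.} I would obtain \(\mathcal R_{K,T}\ge c'\sqrt T\) from the same pair \(\nu^{\pm}\) via the Bretagnolle--Huber inequality. Under \(\nu^{+}\) the regret is at least \(\tfrac{\Delta T}{2}\mathbb P_{+}(N_1\le T/2)\). Under \(\nu^{-}\) it is at least \(\tfrac{\Delta T}{2}\mathbb P_{-}(N_1>T/2)\). Their sum is at least \(\tfrac{\Delta T}{4}e^{-\mathrm{KL}}\) with \(\mathrm{KL}\le \Delta^2T/(2\sigma^2)\le 1/2\). This gives \(\mathcal R_{K,T}\gtrsim \min\{M,\sigma\}\sqrt T\), with no instability assumption.

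\textbf{Instability lower bound.} I would show \(\mathcal S_{K,T}\gtrsim \sqrt T\) using the hypothesis \(\mathcal R_{K,T}\le MT/8\). If \(\mathcal S_{K,T}\) were much smaller than \(\sqrt T\), apply the Cauchy--Schwarz bound above with a gap \(\Delta'\) of order \(\sigma/\sqrt T\) but chosen constant-times larger. Then \(\mathbb E_{\pm}N\) would differ from \(\mathbb E_0N\) by \(o(T)\), while \(\mathcal R\le MT/8\) on instances with gap \(M\) forces the allocation to move by order \(T\). This is the step I expect to need the most care, including tuning constants so that the chi-square factor stays bounded. If it fails, the fallback is to follow the argument of \citet{chenlu2026} directly for the large-\(\mathcal S\) regime.

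Combining either case with \(\mathcal R_{K,T}\gtrsim\sqrt T\) and \(\mathcal S_{K,T}\gtrsim \min\{T,\sqrt T\}\) yields \(\mathcal R_{K,T}\mathcal S_{K,T}\ge C(M,\sigma)T^{3/2}\), with \(C\) independent of \(K\) and \(T\).
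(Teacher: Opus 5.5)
\textbf{There is a genuine gap: the case analysis cannot reach $T^{3/2}$.}

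In the case $\mathcal S_{K,T}\le T/(4c)$ you only obtain $\mathcal R_{K,T}\ge\frac14\min\{M,\sigma\}\sqrt T$. The product is then bounded below only by a constant times $\sqrt T\,\mathcal S_{K,T}$. With your target $\mathcal S_{K,T}\ge c''\sqrt T$, this is of order $T$. The closing sentence miscounts, since $\min\{T,\sqrt T\}=\sqrt T$ and $\sqrt T\cdot\sqrt T=T$.

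This is not a matter of tuning constants. The facts you aim for are
\begin{itemize}
\item $2\mathcal R_{K,T}\ge\Delta(T-2c\,\mathcal S_{K,T})$ with $\Delta=\sigma/\sqrt T$,
\item $\mathcal R_{K,T}\ge c'\sqrt T$,
\item $\mathcal S_{K,T}\ge c''\sqrt T$.
\end{itemize}
All three are compatible with $\mathcal R_{K,T}\approx T^{0.8}$ and $\mathcal S_{K,T}\approx T^{0.6}$, whose product is $T^{1.4}$.

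A single two-point test at resolution $\sigma/\sqrt T$ yields only the additive trade-off $\sqrt T\,\mathcal R_{K,T}+c\sigma\,\mathcal S_{K,T}\ge\sigma T/2$. In Case 1 you would effectively need $\mathcal R_{K,T}\ge C T^{3/2}/\mathcal S_{K,T}$, which is the theorem itself. Your sketched instability step also fails as written. Comparing $\nu^0$ directly with a gap-$M$ instance makes the chi-square factor of order $e^{M^2T/\sigma^2}$, so Cauchy--Schwarz gives nothing there.

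\textbf{The missing idea, which is the core of the paper's proof, is to tie the gap scale to the regret level and chain.}

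Set $\Delta_\star=8\mathcal R_{K,T}/T$. This is at most $M$ exactly because of the hypothesis $\mathcal R_{K,T}\le MT/8$.

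In your padded setup, boost whichever of arms $1,2$ has the smaller expected count under $\nu^0$. Its expected count is at most $T/2$ at gap $0$. At gap $\Delta_\star$ the regret bound forces it to be at least $T-\mathcal R_{K,T}/\Delta_\star=7T/8$.

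Connect gap $0$ to gap $\Delta_\star$ through $L\le 1+8\mathcal R_{K,T}/(\sigma\sqrt T)$ steps of size at most $\sigma/\sqrt T$. Apply your Cauchy--Schwarz/chi-square bound at each step, with the current instance as base. It shows the expected count moves by at most $\sqrt{e-1}\,\mathcal S_{K,T}$ per step.

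Telescoping gives $L\sqrt{e-1}\,\mathcal S_{K,T}\ge 3T/8$, that is, $\mathcal S_{K,T}\ge cT/(1+\mathcal R_{K,T}/(\sigma\sqrt T))$. Multiply by $\mathcal R_{K,T}$ and use your Bretagnolle--Huber bound $\mathcal R_{K,T}\ge c'\sqrt T$ to absorb the leading $1$. This finishes the proof with a $K$-independent constant.

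The paper does exactly this, with three differences:
\begin{itemize}
\item it shifts all non-optimal arms down rather than one arm up;
\item it uses adaptive steps $\sigma/\sqrt{G(\Delta_\ell)}$ with a KL budget of $1/2$;
\item it compares consecutive instances via Bretagnolle--Huber and a threshold event.
\end{itemize}
Your chi-square covariance bound would be a clean substitute for that per-step lemma.

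One small correction to that bound. $\mathbb E_0[L^2]$ equals the expectation of $\exp(\Delta^2N_1(T)/\sigma^2)$ under the instance where arm $1$ has mean $2\Delta$, not under $\mathbb P_0$. The reason is that $L^2e^{-\Delta^2N_1(T)/\sigma^2}$ is the likelihood ratio of the $2\Delta$ shift. The inequality $\mathbb E_0[L^2]\le e^{\Delta^2T/\sigma^2}$ that you actually use is unaffected.
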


\begin{proof}
See Section~\ref{sec:lower_bound_comparison} for a detailed proof.
\end{proof}

Our proof largely follows the two-armed proof of \citet{chenlu2026} but uses a refined analysis to establish the result under weaker conditions; see Section~\ref{sec:lower_bound_comparison} for details.
Theorem~\ref{thm:lower_bound} makes the trade-off explicit. At regret scale \(O(\alpha\sqrt{KT})\), instability must be \(\Omega(T/(\alpha\sqrt K))\). In particular, at the minimax regret rate \(O(\sqrt{KT})\), attained for example by \textup{\textsc{MOSS}} \citep{audibert2009minimax}, instability must be \(\Omega(T/\sqrt K)\). Thus, a minimax-regret policy cannot keep terminal pull counts tightly concentrated across all instances, consistent with the instability established for a broad class of minimax-optimal optimism-based algorithms by \citet{praharaj2025instability}.
We next turn to the achievability side and propose an algorithm matching this scaling up to logarithmic factors.

\begin{algorithm}[t]
  \caption{Stabilized Lower-Envelope UCB (\textup{\textsc{SLE-UCB}})}
  \label{alg:lower_envelope_ucb}
  \begin{algorithmic}[1]
  \STATE \textbf{Input:} number of rounds $T$, number of arms $K$, positive constants $M,\sigma$, and tuning parameter $\alpha\ge1$.
  \STATE Compute $\iota=\sqrt{\log(KT)}$, $B=M+4\sigma\iota$, $\beta=8B\iota$, and $\lambda=\alpha\beta\sqrt{K/T}$.
  \STATE Sample a uniform random priority permutation $\omega\in\mathfrak S_K$.
  \STATE \textbf{Initialization:} Pull each arm once, observe the rewards and clip them.\\
   After initialization, each arm has one pull, so $N_i(K)=1$ for all $i$.
  \FOR{$t=K+1,\ldots,T$}
  \STATE Compute $U_i(N_i(t-1))$ from \eqref{eq:lower_envelope_ucb} and $I_i(N_i(t-1))$ from \eqref{eq:stabilized_index} for each arm $i$.
  \STATE Choose $A_t\in\arg\max_{i\in[K]} I_i(N_i(t-1))$, breaking ties by the smallest priority rank $\omega(i)$.
  \STATE Pull $A_t$, observe the raw reward from that arm, clip it, and update the corresponding count.
  \ENDFOR
  \end{algorithmic}
  \end{algorithm}
\subsection{Stabilized Lower-Envelope UCB}
\label{sec:sle_ucb}
We propose Stabilized Lower-Envelope UCB (\textup{\textsc{SLE-UCB}}), a tunable algorithm for balancing regret and instability.
It is a UCB-type algorithm with two design components: (i) replacing the standard empirical-mean-plus-bonus statistic with its running lower envelope; and (ii) adding a logarithmic stabilizer that decreases with the pull count.
These two components together create a stabilized index for each arm. At each time, the algorithm selects the arm with the largest stabilized index.

Raw rewards are clipped before computing the index. For each arm \(i\in[K]\) and \(s\in[T]\), define
\begin{equation}
\label{eq:iota_def}
Y_{i,s}:=\min\{B,\max\{-B,X_{i,s}\}\},\quad
B=M+4\sigma\iota,\quad \iota=\sqrt{\log(KT)}.
\end{equation}
When arm \(i\) is pulled for the \(s\)-th time, the algorithm uses \(Y_{i,s}\) in place of the raw reward \(X_{i,s}\). The threshold \(B\) depends only on $M$, $\sigma$, \(K\), and \(T\), and Lemma~\ref{lem:clipping_bias} shows that the resulting bias is negligible. Let \(Y=(Y_{i,s})_{i\in[K],\,s\in[T]}\) denote the full array of independent clipped rewards, revealed sequentially as the arms are pulled. For \(n\ge1\), define the average clipped reward by \(\overline Y_i(n)=(\sum_{s=1}^nY_{i,s})/n\).
The lower-envelope UCB statistic is defined as
\begin{equation}
U_i(n)=\min_{1\le s\le n}\left\{\overline Y_i(s)+\frac{\beta}{\sqrt s}\right\},
\quad \text{where } \beta=8B\iota.
\label{eq:lower_envelope_ucb}
\end{equation}
The lower envelope is the minimum of the standard UCB statistic over all previous prefixes. Thus \(U_i(n)\) is nonincreasing in \(n\).
For a tuning parameter \(\alpha\ge1\), define the stabilized index by
\begin{equation}
I_i(n)=U_i(n)+\lambda\log\bigg(\frac{T}{n}\bigg),
\quad \text{where } \lambda=\alpha\beta\sqrt{\frac{K}{T}}.
\label{eq:stabilized_index}
\end{equation}
Since $U_i(n)$ is nonincreasing in $n$ and $\lambda\log(T/n)$ is strictly decreasing in $n$, the stabilized index $I_i(n)$ is a strictly decreasing sequence in $n$ for each arm $i$.
  During the initial period \(t\le K\), \textup{\textsc{SLE-UCB}} pulls each arm once. For \(t>K\), it chooses an arm with the largest stabilized index and resolves ties using the sampled priority order:
  \begin{equation}
  A_t\in\operatorname*{arg\,max}_{i\in[K]} I_i(N_i(t-1)).
  \label{eq:arm_selection}
  \end{equation}
We impose a priority order on the arms to handle numerical ties in the index values. To avoid arm-label bias, this order is chosen at random: before starting, \textup{\textsc{SLE-UCB}} samples an independent uniform permutation $\omega\in\mathfrak S_K$ and breaks ties in favor of the arm with the smallest rank $\omega(i)$.

In Section~\ref{sec:online_offline_equivalence}, we show that the monotonicity of $I_i(n)$ permits an offline view of the online selection rule, which is crucial for the analysis of instability. This equivalence is formalized in Lemma~\ref{lem:online_offline_equivalence}.
Moreover, the logarithmic stabilizer $\lambda\log(T/n)$ gives a deterministic lower bound on the index gap between two pull counts of the same arm. Specifically, for all $1\le n_1 \le n_2\le T$,
\begin{equation}
I_i(n_1)-I_i(n_2)
=
U_i(n_1)-U_i(n_2)+\lambda\log\frac{n_2}{n_1}
\ge
\lambda\log\frac{n_2}{n_1} \ge 0.
\label{eq:index_gap}
\end{equation}
In the instability proof, this gap converts a bound on a single-reward index perturbation into a bound on the resulting change in the terminal pull count; see Section~\ref{sec:term_starstar_bound}.
Although the logarithmic stabilizer may influence arm selection, Section~\ref{sec:regret} shows that it only contributes a controlled additional term to regret.

\subsection{Achievability and the Pareto frontier}
\label{sec:main_theorem}
The next theorem quantifies this trade-off through finite-time regret and instability bounds.
\begin{theorem}[Regret--instability guarantees for \textup{\textsc{SLE-UCB}}]
\label{thm:tunable_upper} 
Fix \(2\le K<T\) and \(\alpha \ge 1\). Recall from \eqref{eq:iota_def} and \eqref{eq:lower_envelope_ucb} that \(\iota=\sqrt{\log(KT)}\), \(B=M+4\sigma\iota\), and \(\beta=8B\iota\). For \textup{\textsc{SLE-UCB}} tuned with parameter \(\alpha\), let \(\mathcal R_{K,T}(\alpha)\) and \(\mathcal S_{K,T}(\alpha)\) denote its worst-case regret and instability as in Definition~\ref{def:worst_case_regret_instability}. These quantities satisfy
{
\begin{equation}
\mathcal{R}_{K,T}(\alpha)
\le
\min\left\{2MT,\,
\bigl(5+\alpha\log(2K)\bigr)\beta\sqrt{KT}\right\},
\qquad
\mathcal{S}_{K,T}(\alpha)
\le
2\sqrt T+\frac{18\sqrt 2\sigma T}{\alpha\beta\sqrt K}.
\label{eq:main_performance_bounds}
\end{equation}}
{Consequently, for fixed \(M\) and \(\sigma\), uniformly over \(\alpha\ge1\),
\begin{equation}
\mathcal{R}_{K,T}(\alpha)\mathcal{S}_{K,T}(\alpha)
=O\left(T^{3/2}\log K\right),
\label{eq:tunable_tilde_summary}
\end{equation}
where the implicit constant is independent of \(K\), \(T\), and \(\alpha\).}
\end{theorem}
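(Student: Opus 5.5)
The proof splits into three parts: the regret bound, the instability bound, and the product. The product is the easy part. If \(\alpha\log(2K)\ge T/(\sqrt{KT})\), I would use \(\mathcal R\le 2MT\). Otherwise I would use the second branch and multiply by \(\mathcal S\). The term \(\bigl(5+\alpha\log(2K)\bigr)\beta\sqrt{KT}\cdot 18\sqrt2\sigma T/(\alpha\beta\sqrt K)\) is \(O(T^{3/2}\log K)\) uniformly in \(\alpha\ge1\), and the \(\beta\) cancels exactly. For the cross term \(\mathcal R\cdot 2\sqrt T\) I would bound \(\mathcal R\le 2MT\), which gives \(O(T^{3/2})\). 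So the real work is the two bounds in \eqref{eq:main_performance_bounds}.

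\textbf{Regret.} I would first invoke Lemma~\ref{lem:clipping_bias} so I can work with the clipped means \(\tilde\mu_i=\mathbb E Y_{i,1}\), paying a bias of at most \(O(\sigma\sqrt T)\)-type terms. Next I would define a good event \(G\) on which, for all \(i\) and \(s\le T\), \(|\overline Y_i(s)-\tilde\mu_i|\le \beta/(2\sqrt s)\). This follows from Hoeffding with range \(2B\) and a union bound over \(KT\) pairs, and \(\beta=8B\iota\) leaves ample slack, so \(\mathbb P(G^c)\le 1/T\). On \(G\) the lower envelope is sandwiched: \(\tilde\mu_i\le U_i(n)\le \tilde\mu_i+\beta/\sqrt n\). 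Take any round in which arm \(i\) is pulled with count \(n\), and let \(m\) be the current count of the optimal arm \(i^\star\). Then \(\tilde\mu^\star+\lambda\log(T/m)\le \tilde\mu_i+\beta/\sqrt n+\lambda\log(T/n)\), so \(\Delta_i\le \beta/\sqrt{n}+\lambda\log(m/n)\). Summing \(\beta/\sqrt n\) over pulls gives \(2\beta\sqrt{KT}\) by Cauchy--Schwarz. For the stabilizer term, \(\sum_i\sum_n \lambda\log(m/n)\) is at most \(\lambda\sum_i N_i\log(T/N_i)+\lambda T\)-ish. By concavity, \(\sum_i N_i\log(T/N_i)\le T\log K\), and this yields \(\lambda T\log(2K)=\alpha\beta\sqrt{KT}\log(2K)\). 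The remaining constants and the failure event together give the \(5\beta\sqrt{KT}\).

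\textbf{Instability.} This is the main obstacle. I would rely on the offline representation of Lemma~\ref{lem:online_offline_equivalence}. Because each \(I_i(\cdot)\) is strictly decreasing, the online run reproduces the top \(T\) values among the multiset \(\{I_i(n)\}\) (with ties broken by \(\omega\)). Hence \(N_i(T)=\#\{n: I_i(n)\ge \tau\}\) for a threshold \(\tau\) that is determined by the full array \(Y\) and \(\omega\). I would then apply Efron--Stein to \(N_i(T)\) as a function of the independent variables \(\omega,(Y_{j,s})\). Resampling \(\omega\) changes counts only through ties, so it contributes at most a constant, and I would handle it by continuity or by absorbing it into \(2\sqrt T\).

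For a single reward \(Y_{j,s}\), the perturbation changes \(\overline Y_j(r)\) for \(r\ge s\) by at most \(2B/r\), and changes each \(U_j(n)\) by at most \(\delta\approx 2B/s\), or more precisely by something controlled by the prefix where the envelope's minimum occurs. Through the gap inequality \eqref{eq:index_gap}, an index shift of \(\delta\) moves the threshold crossing for arm \(j\) by at most a factor \(e^{\delta/\lambda}\). The count of arm \(j\) therefore changes by \(O(N_j\delta/\lambda)\). The threshold \(\tau\) moves only to compensate, so the change in \(N_i\) for \(i\ne j\) is bounded by the same amount. The change is also zero unless \(s\le N_j(T)\) in some version. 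Squaring and summing over \(s\le N_j\) gives roughly \(\sum_j \mathbb E\sum_{s\le N_j}(N_j B/(s\lambda))^2\)-type terms.

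The naive \(B\) bound is too crude. To obtain \(\sigma\) instead of \(B\), I would use the variance form of Efron--Stein, \(\mathbb E(Z-Z')^2\) with \(Z'\) the resampled value, together with the sub-Gaussianity of \(Y\). The sharper route is to note that the envelope minimum at large \(n\) is attained at prefixes \(r\gtrsim s\), so the effect is \(\lesssim |Y_{j,s}-Y'_{j,s}|/r\). Summing \((N_j/r)^2\) over admissible positions then gives \(\mathbb E(\Delta N)^2\lesssim \sigma^2 N_j/\lambda^2\cdot\)const. Summing over \(j\) yields \(\operatorname{Var}N_i(T)\lesssim \sigma^2T/\lambda^2 + 4T\), hence \(\mathcal S\le 2\sqrt T+18\sqrt2\,\sigma T/(\alpha\beta\sqrt K)\) after substituting \(\lambda=\alpha\beta\sqrt{K/T}\). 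Getting the per-reward influence to scale like \(1/\lambda\) times a summable sequence is the delicate step. If the envelope-location argument fails, I would fall back on restricting to the good event \(G\) and bounding the complement crudely by \(T^2\mathbb P(G^c)\le T\), which is where the \(2\sqrt T\) term would come from.
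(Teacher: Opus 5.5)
Your architecture is the paper's: offline top-prefix representation, Efron--Stein over single-reward replacements, the stabilizer gap \eqref{eq:index_gap} converting an index shift into a count shift, and restriction to the concentration event. Your product step is also fine. The gap is in the step you flag as delicate, and the property you propose there is the wrong one.

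What you need is that the envelope minimizer sits at a prefix comparable to the \emph{current count} $n$, not to the position $s$ of the replaced reward. On $\mathcal G(Y)$, any minimizer $s(n)$ of $\overline Y_a(q)+\beta/\sqrt q$ over $q\le n$ satisfies $\theta_a+\beta/(2\sqrt{s(n)})\le \overline Y_a(s(n))+\beta/\sqrt{s(n)}\le \overline Y_a(n)+\beta/\sqrt n\le \theta_a+3\beta/(2\sqrt n)$. Hence $s(n)\ge n/9$, and on $\mathcal G(Y)\cap\mathcal G(Y')$ this gives $|I'_a(n)-I_a(n)|\le 9|Y'_{a,m}-Y_{a,m}|/n$ for every $n\ge m$, whatever $m$ is.

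By contrast, $r\ge s$ is essentially automatic and gives only an index shift of order $|Y-Y'|/s$. Through your own conversion, that is a count change of order $N_j|Y-Y'|/(s\lambda)$. Since $\sum_{s\le N_j}(N_j/s)^2$ is of order $N_j^2$, you would end with $\operatorname{Var}N_i(T)$ of order $\sigma^2\sum_j N_j^2/\lambda^2$. That can be $\sigma^2T^2/\lambda^2$, a factor $\sqrt T$ too large in $\mathcal S$. Converting via a uniform shift $\delta$ and the factor $e^{\delta/\lambda}$ is also lossy, since $\delta/\lambda$ is huge for small $s$.

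The paper instead compares the boundary items of the two top-prefix selections. This gives $\lambda\log\bigl((N_a(Y')-1)/N_a(Y)\bigr)\le 9|Y'_{a,m}-Y_{a,m}|/(N_a(Y')-1)$, with the perturbation evaluated at the \emph{larger} count. Then $\log(1+x)\ge x/(1+x)$ yields $|N_a(Y)-N_a(Y')|\le 1+9|Y'_{a,m}-Y_{a,m}|/\lambda$, uniformly in $m$ and in the counts, and unchanged arms move by no more than arm $a$. Your fallback of restricting to $G$ does not replace this; the only role of $G$ is to make the $s(n)\ge n/9$ argument available.

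Three further points are needed to reach the stated constants.
(i) The count change vanishes unless $m\le N_a(Y)$, and this event is independent of $(Y_{a,m},Y'_{a,m})$ because the $m$-th pull of arm $a$ is decided before $Y_{a,m}$ is revealed. This factorization, together with $\mathbb E(Y_{a,m}-Y'_{a,m})^2\le2\sigma^2$ (clipping is $1$-Lipschitz), bounds the Efron--Stein sum by a multiple of $(1+9\sqrt2\sigma/\lambda)^2\sum_a\mathbb E N_a=(1+9\sqrt2\sigma/\lambda)^2T$ rather than something of order $KT$. The $2\sqrt T$ in the target comes from the additive $1$ here, not from the bad event.
(ii) The bad event is paid once per coordinate, i.e., $KT$ times. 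So $\mathbb P(G^c)\le 1/T$ gives a contribution of order $KT^2$; you need the $2(KT)^{-7}$ that Hoeffding with $\beta=8B\iota$ actually yields.
(iii) In the regret part the sandwich is $U_i(n)\le\theta_i+3\beta/(2\sqrt n)$, not $\theta_i+\beta/\sqrt n$. Summing the per-pull bound over all pulls produces $\lambda\sum_i\sum_{n<N_i(T)}\log(T/n)$, which this route bounds only by about $\lambda T\log(eK)$. The coefficient $\alpha\log(eK)$ is not dominated by $5+\alpha\log(2K)$ for large $\alpha$. The paper applies the per-pull inequality only at the last pull of each arm, since $\theta^\star-\theta_i$ does not depend on the round, and multiplies by $N_i(T)-1$. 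This gives $\frac{3\beta}{2}\sqrt{KT}+\lambda(T-K)\log\bigl(KT/(T-K)\bigr)\le \frac{3\beta}{2}\sqrt{KT}+\lambda T\log(2K)$.
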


\begin{proof}\mbox{}\par
{See Sections~\ref{sec:regret} and~\ref{sec:instability} for detailed proofs of the regret and instability bounds, respectively. For the product bound, applying the two branches of the regret estimate in~\eqref{eq:main_performance_bounds} to the corresponding terms of the instability estimate gives
\begin{align*}
\mathcal R_{K,T}(\alpha)\mathcal S_{K,T}(\alpha)
&\le 2\sqrt T\,\mathcal R_{K,T}(\alpha)
+\frac{18\sqrt2\sigma T}{\alpha\beta\sqrt K}\,
\mathcal R_{K,T}(\alpha)\\
&\le 4MT^{3/2}
+18\sqrt2\sigma\left(\frac{5}{\alpha}+\log(2K)\right)T^{3/2}\\
&=O\left(T^{3/2}\log K\right),
\end{align*}
where the last equality uses \(\alpha\ge1\) and \(K\ge2\).}
\end{proof}

Theorem~\ref{thm:tunable_upper} provides finite-time guarantees for the tunable \textup{\textsc{SLE-UCB}} algorithm over the full range \(\alpha\ge1\), and its product bound is uniform in \(\alpha\). Whenever \(\mathcal R_{K,T}(\alpha)\le MT/8\), this product matches the lower bound exactly in its \(T^{3/2}\) dependence and within a factor \(\log K\) in its dependence on the number of arms.
Since the regret admits the universal bound
\(\mathcal R_{K,T}(\alpha)\le 2MT\), we focus on the tuning range
\[
1\le \alpha
=O\left(\frac{\sqrt{T/K}}{\log K\log(KT)}\right),
\]
over which the tunable regret bound is at most \(O(T)\).
Because \(\beta=\Theta(\log(KT))\) for fixed \(M\) and \(\sigma\), the two guarantees take the simpler form
\begin{equation}
\mathcal R_{K,T}(\alpha)
=O\left(\alpha\log K\log(KT)\sqrt{KT}\right),
\qquad
\mathcal S_{K,T}(\alpha)
=O\left(\frac{T}{\alpha\sqrt K\log(KT)}\right).
\label{eq:simplified_regret_instability_bounds}
\end{equation}
As \(\alpha\) increases within this range, the regret bound rises while the instability bound falls, tracing the regret--instability frontier. Compared with the \(O_K(T^{3/2}\log T)\) product obtained from the reported \textup{\textsc{UCB-f}} bounds of \citet{chenlu2026}, our \(O(T^{3/2}\log K)\) guarantee replaces the explicit \(\log T\) factor with a \(\log K\) factor, thereby answering their open \(K\)-dependence question up to a logarithmic factor while retaining the optimal dependence on \(T\).

For comparison, the \textup{\textsc{UCB-f}} analysis \citep{chenlu2026} controls random pull counts around a deterministic fluid approximation through armwise UCB-index comparisons and concentration. The sharper \(K\)-dependence in Theorem~\ref{thm:tunable_upper} instead relies on the monotone lower-envelope indices and deterministic stabilizer gap of \textup{\textsc{SLE-UCB}}, together with the offline top-prefix representation and single-reward perturbation analysis developed in Sections~\ref{sec:online_offline_equivalence} and~\ref{sec:term_starstar_bound}. The Efron--Stein inequality (see \citet{efron1981jackknife,boucheron2013concentration} or Lemma~\ref{lem:efron_stein_new}) then converts these perturbation bounds into variance control. See Sections~\ref{sec:regret} and~\ref{sec:instability} for details.

\section{Clipping bias and concentration}
\label{sec:concentration}
In this section, we characterize the properties of the clipped reward.
\begin{lemma}[Clipping bias]
\label{lem:clipping_bias}
Let $\theta_i=\mathbb E[Y_{i,1}]$ be the expectation of the clipped reward for arm $i$. For every arm $i$, the clipping bias of mean estimation is upper bounded by
\begin{equation}
|\theta_i-\mu_i|\le \sigma(KT)^{-6}.
\label{eq:clipping_bias_bound}
\end{equation}
\end{lemma}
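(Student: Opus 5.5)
Write $Z=X_{i,1}-\mu_i$, which is $\sigma$-sub-Gaussian by Assumption~\ref{assumption:bounded_mean}. Since clipping to $[-B,B]$ only changes $X_{i,1}$ when $|X_{i,1}|>B$, I would decompose
\[
\theta_i-\mu_i=\mathbb E\bigl[(B-X_{i,1})\mathbf 1\{X_{i,1}>B\}\bigr]+\mathbb E\bigl[(-B-X_{i,1})\mathbf 1\{X_{i,1}<-B\}\bigr].
\]
Each piece is then controlled by the overshoot beyond the threshold: $|\theta_i-\mu_i|\le \mathbb E[(X_{i,1}-B)_+]+\mathbb E[(-B-X_{i,1})_+]$. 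Because $|\mu_i|\le M$ and $B=M+4\sigma\iota$, the event $X_{i,1}>B$ forces $Z>B-\mu_i\ge 4\sigma\iota$, and $(X_{i,1}-B)_+\le (Z-4\sigma\iota)_+$. The lower tail is symmetric. Hence
\[
|\theta_i-\mu_i|\le \mathbb E[(Z-a)_+]+\mathbb E[(-Z-a)_+],\qquad a:=4\sigma\iota.
\]

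Next I would write $\mathbb E[(Z-a)_+]=\int_a^\infty \mathbb P(Z>u)\,du$ and use the sub-Gaussian tail $\mathbb P(Z>u)\le e^{-u^2/(2\sigma^2)}$, which follows from Chernoff. The Gaussian Mills-type bound then gives
\[
\int_a^\infty e^{-u^2/(2\sigma^2)}du\le \frac{\sigma^2}{a}e^{-a^2/(2\sigma^2)}.
\]
With $a=4\sigma\iota$ we get $a^2/(2\sigma^2)=8\iota^2=8\log(KT)$, so each tail contributes at most $\frac{\sigma}{4\iota}(KT)^{-8}$. Summing the two tails gives $|\theta_i-\mu_i|\le \frac{\sigma}{2\iota}(KT)^{-8}$.

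It remains to check that this is at most $\sigma(KT)^{-6}$. Since $K\ge2$ and $T>K$ imply $KT\ge 6$, we have $\iota=\sqrt{\log(KT)}\ge\sqrt{\log 6}>1$. Therefore $\frac{1}{2\iota}(KT)^{-8}\le (KT)^{-6}$ with plenty of room, and the remaining slack absorbs any constants. The only mildly delicate point is the small-$\iota$ regime, where the Mills bound $\sigma^2/a$ could blow up. If needed, I would replace it with the cruder estimate $\int_a^\infty e^{-u^2/(2\sigma^2)}du\le \sigma\sqrt{\pi/2}\,e^{-a^2/(4\sigma^2)}$, which gives the factor $(KT)^{-4}$. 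That alone is not enough, so the Mills form is preferable, and it is safe because $\iota>1$. No step is a real obstacle; this is a routine tail computation.
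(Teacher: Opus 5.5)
Your proof is correct and follows essentially the paper's route: you bound the bias by the tail of $Z=X_{i,1}-\mu_i$ beyond $4\sigma\iota$, write it as a tail integral, and apply the sub-Gaussian tail together with the Mills-type step $\int_a^\infty e^{-u^2/(2\sigma^2)}\,du\le \sigma^2 a^{-1}e^{-a^2/(2\sigma^2)}$. The one difference is that you bound the clipping error by the overshoot $(|Z|-a)_+$ rather than by the paper's $|Z|\mathbf 1\{|Z|>a\}$; this drops the extra term $4\sigma\iota\,\mathbb P(|Z|>4\sigma\iota)$ and makes the final comparison with $\sigma(KT)^{-6}$ immediate (your small-$\iota$ worry is moot, since $\iota\ge\sqrt{\log 6}$).
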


\begin{proof}
Let $X_{i,1}$ be a reward from arm $i$ and $Y_{i,1}$ be the corresponding clipped reward. By Jensen's inequality, we can bound the clipping bias as follows:
\begin{equation*}
|\theta_i-\mu_i|
=
\left|\mathbb E[Y_{i,1}]-\mathbb E[X_{i,1}]\right|
\le
\mathbb E[|Y_{i,1}-X_{i,1}|].
\end{equation*}
As $Y_{i,1}$ and $X_{i,1}$ can differ only when $|X_{i,1}|>B$, we have $|Y_{i,1}-X_{i,1}|=|Y_{i,1}-X_{i,1}|\cdot \mathbf 1\{|X_{i,1}|>B\}$.
By Assumption~\ref{assumption:bounded_mean}, \(|\mu_i|\le M<B\). Thus, when the reward exceeds $B$ in absolute value,
 the difference between the clipped reward and the raw reward can be bounded by
\begin{align*}
    |Y_{i,1}-X_{i,1}|\cdot \mathbf 1\{|X_{i,1}|>B\}&\leq |X_{i,1}-\mu_i|\cdot\mathbf 1\{|X_{i,1}|>B\}\\
&\leq |X_{i,1}-\mu_i|\cdot \mathbf 1\{|X_{i,1}-\mu_i|> 4\sigma\iota\}.
\end{align*}
The second inequality holds because \(|X_{i,1}|>B=M+4\sigma\iota\) and \(|\mu_i|\le M\) imply \(|X_{i,1}-\mu_i|>4\sigma\iota\).
We rewrite the truncated tail expectation using the tail-integral representation:
\begin{align}
|\theta_i-\mu_i|
&\le
\mathbb E\!\left[|X_{i,1}-\mu_i|\mathbf 1\{|X_{i,1}-\mu_i|>4\sigma\iota\}\right] \notag\\
&=
4\sigma\iota\,\mathbb P\!\left(|X_{i,1}-\mu_i|>4\sigma\iota\right)
+
\int_{4\sigma\iota}^{\infty}\mathbb P\!\left(|X_{i,1}-\mu_i|>t\right)\,\mathrm dt .
\label{eq:clipping_bias_bound_intermediate}
\end{align}
Substituting the sub-Gaussian tail bound into~\eqref{eq:clipping_bias_bound_intermediate} and evaluating the Gaussian tail integral, we have
\begin{align*}
|\theta_i-\mu_i|
&\le
8\sigma\iota e^{-8\iota^2}
+
2\int_{4\sigma\iota}^{\infty}\exp\!\left(-\frac{t^2}{2\sigma^2}\right)\mathrm dt\\
&\le
8\sigma\iota e^{-8\iota^2}
+
\frac{1}{2\sigma\iota}
\int_{4\sigma\iota}^{\infty}t\exp\!\left(-\frac{t^2}{2\sigma^2}\right)\mathrm dt \notag\\
&=
\left(8\iota+\frac{1}{2\iota}\right)\sigma e^{-8\iota^2}
\le
\sigma e^{-6\iota^2}
=
\sigma(KT)^{-6}.
\end{align*}
The last inequality holds since $8\iota+1/(2\iota)\le e^{2\iota^2}$ when $\iota=\sqrt{\log(KT)}\ge\sqrt{\log 4}$.
\end{proof}
According to Lemma~\ref{lem:clipping_bias}, the gap between the expectation of the clipped reward and the true mean is negligible, so that the UCB statistic computed from the clipped rewards still reflects the performance of the arms. In the following lemma, we show the concentration of the clipped reward.
\begin{lemma}[Clipped reward concentration]
\label{lem:clipped_concentration}
Define the concentration event $\mathcal G$ by
\begin{equation}
\mathcal G=
\left\{
\left|\overline Y_i(n)-\theta_i\right|
\le \frac{\beta}{2\sqrt n}
\text{ for all } i\in[K],\ 1\le n\le T
\right\}.
\label{eq:uniform_concentration}
\end{equation}
The event $\mathcal G$ satisfies $\mathbb P(\mathcal G^c)\le 2(KT)^{-7}$.
\end{lemma}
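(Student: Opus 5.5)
The plan is to combine Hoeffding's inequality for bounded variables with a union bound over all pairs \((i,n)\in[K]\times[T]\).

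For each fixed arm \(i\) and each \(1\le n\le T\), the clipped rewards \(Y_{i,1},\dots,Y_{i,n}\) are i.i.d. with mean \(\theta_i\). They take values in \([-B,B]\), an interval of length \(2B\). This is the point of clipping: Hoeffding applies directly, and we never need the sub-Gaussian tail of \(X_{i,s}\) again. Hoeffding's inequality then gives
\begin{equation*}
\mathbb P\!\left(\left|\overline Y_i(n)-\theta_i\right|>\epsilon\right)\le 2\exp\!\left(-\frac{2n^2\epsilon^2}{n(2B)^2}\right)=2\exp\!\left(-\frac{n\epsilon^2}{2B^2}\right).
\end{equation*}

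Substituting \(\epsilon=\beta/(2\sqrt n)\) with \(\beta=8B\iota\) gives \(n\epsilon^2=16B^2\iota^2\). The exponent is therefore \(-8\iota^2\), and the bound becomes \(2e^{-8\iota^2}=2(KT)^{-8}\), uniformly in \(n\).

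A union bound over the \(KT\) pairs \((i,n)\) then yields
\begin{equation*}
\mathbb P(\mathcal G^c)\le KT\cdot 2(KT)^{-8}=2(KT)^{-7},
\end{equation*}
which is the claim.

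No step here is a real obstacle. The only points to check are bookkeeping: Hoeffding's range is \(2B\), not \(B\), and \(\theta_i\) (not \(\mu_i\)) is the correct centering, since the \(Y_{i,s}\) are exactly i.i.d. with mean \(\theta_i\). The event is defined over the full reward array \(Y\), independently of the algorithm's adaptive sampling, so the union bound requires no martingale argument.
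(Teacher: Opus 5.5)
Your proof is correct and is the same argument as the paper's. Hoeffding's inequality for variables in $[-B,B]$ gives $2\exp(-\beta^2/(8B^2))=2(KT)^{-8}$ for each pair $(i,n)$, and a union bound over the $KT$ pairs yields $2(KT)^{-7}$.
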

\begin{proof}
    Since the clipped rewards lie in $[-B,B]$, Hoeffding's inequality gives, for each fixed $i$ and $n$,
\begin{equation*}
\mathbb P\left(\left|\overline Y_i(n)-\theta_i\right|>\frac{\beta}{2\sqrt n}\right)
\le
2\exp\left(-\frac{\beta^2}{8B^2}\right).
\end{equation*}
Applying a union bound over at most $KT$ items $(i,n)$ and using $\beta=8B\iota=8B\sqrt{\log(KT)}$ gives
\begin{equation*}
\mathbb P(\mathcal G^c)\le KT\cdot \mathbb P\left(\left|\overline Y_i(n)-\theta_i\right|>\frac{\beta}{2\sqrt n}\right)\leq 2KT\cdot(KT)^{-8}=2(KT)^{-7}.
\end{equation*}
This finishes the proof of Lemma~\ref{lem:clipped_concentration}.
\end{proof}
\section{Regret Analysis}
\label{sec:regret}
\begin{proof}
Fix an admissible instance $\nu$. We prove the regret bound in~\eqref{eq:main_performance_bounds} and the worst-case statement then follows by taking the supremum over $\nu$. Decompose the regret according to the concentration event $\mathcal G$ from Lemma~\ref{lem:clipped_concentration}:
\begin{equation}\mathcal R_{K,T}(\nu)
=
\underbrace{\mathbb E\bigg[\mathbf 1_{\mathcal G}\sum_{i=1}^K(\mu^\star-\mu_i)N_i(T)\bigg]
}_{\text{Term (I)}}+ \underbrace{\mathbb E\bigg[\mathbf 1_{\mathcal G^c}\sum_{i=1}^K(\mu^\star-\mu_i)N_i(T)\bigg]}_{\text{Term (II)}}.
\label{eq:regret_3}
\end{equation}
\par\smallskip\noindent\textbf{Term (I) in~\eqref{eq:regret_3}.}
Let $\theta^\star=\max_{i\in[K]}\theta_i$ be the largest clipped mean. 
By Lemma~\ref{lem:clipping_bias}, each true mean gap is bounded above by the corresponding clipped mean gap plus twice the clipping-bias bound. Since $\sum_{i=1}^K N_i(T)=T$, we have
\begin{align}
\mathbb E\bigg[\mathbf 1_{\mathcal G}\sum_{i=1}^K(\mu^\star-\mu_i)N_i(T)\bigg]
&\le
\mathbb E\bigg[\mathbf 1_{\mathcal G}\sum_{i=1}^K(\theta^\star-\theta_i)N_i(T)\bigg]
+2\sigma T(KT)^{-6}.
\label{eq:regret_on_good_event_1}
\end{align}
It remains to bound the first term in \eqref{eq:regret_on_good_event_1}. We divide it into the contribution from the initial pulls and the contribution from the noninitial pulls.
For the initialization phase ($t\leq K$), $\theta_i\in[-B,B]$ for every arm $i$ since the rewards are clipped, so the total is at most $2BK$. Since $K\le T$ and $\beta=8B\iota>2B$, we can bound the contribution of the initialization phase as follows:
\begin{equation}
    \sum_{i=1}^K(\theta^\star-\theta_i)\le 2BK\le \beta\sqrt{KT}.
\label{eq:initial_pull_regret_bound}
\end{equation}
Now consider a noninitial time $t$, at which \textup{\textsc{SLE-UCB}} pulls arm $i$. Let $j$ be the arm satisfying $\theta_j=\theta^\star$. According to the selection rule of \textup{\textsc{SLE-UCB}}~\eqref{eq:arm_selection}, we have
\begin{equation}
I_i(N_i(t-1))\ge I_j(N_j(t-1)).
\label{eq:choice_comparison_new}
\end{equation}
On $\mathcal G$, we first upper bound the index of the selected arm $i$. Since $U_i(n)$ is the minimum over all prefixes up to $n$, evaluating it at the full prefix gives
\begin{align}
I_i(N_i(t-1))
&=
U_i(N_i(t-1))+\lambda\log\bigg(\frac{T}{N_i(t-1)}\bigg) \notag\\
&\leq
\overline Y_i(N_i(t-1))
+\frac{\beta}{\sqrt{N_i(t-1)}}
+\lambda\log\bigg(\frac{T}{N_i(t-1)}\bigg) \notag\\
&\leq
\theta_i+\frac{3\beta}{2\sqrt{N_i(t-1)}}
+\lambda\log\bigg(\frac{T}{N_i(t-1)}\bigg).
\label{eq:chosen_index_upper}
\end{align}
The first inequality follows from the definition of the lower-envelope UCB statistic~\eqref{eq:lower_envelope_ucb}, and the second inequality follows from the definition of event $\mathcal G$.
Next, for the arm $j$ satisfying $\theta_j=\theta^\star$, let
\[
s_j\in\operatorname*{arg\,min}_{1\le s\le N_j(t-1)}
\left\{\overline Y_j(s)+\frac{\beta}{\sqrt s}\right\}.
\]
Then the index of arm $j$ satisfies
\begin{align}
I_j(N_j(t-1))
&=
\overline Y_j(s_j)+\frac{\beta}{\sqrt{s_j}}
+\lambda\log\bigg(\frac{T}{N_j(t-1)}\bigg) \notag\\
&\ge
\theta_j+\frac{\beta}{2\sqrt{s_j}}
+\lambda\log\bigg(\frac{T}{N_j(t-1)}\bigg)
\ge
\theta^\star.
\label{eq:best_index_lower}
\end{align}
The first inequality follows from the event $\mathcal G$, and the last inequality uses $\theta_j=\theta^\star$ and $N_j(t-1)\le T$.

{Combining \eqref{eq:choice_comparison_new}, \eqref{eq:chosen_index_upper} and \eqref{eq:best_index_lower}, for every noninitial round $t>K$ with $A_t=i$,}
\begin{equation}
{
\theta^\star-\theta_i
\le
\frac{3\beta}{2\sqrt{N_i(t-1)}}
+
\lambda\log\bigg(\frac{T}{N_i(t-1)}\bigg).
}
\label{eq:gap_per_pull}
\end{equation}
{Summing~\eqref{eq:gap_per_pull} over all noninitial pulls, we upper bound the noninitial contribution to Term (I) by}
\begin{align}
    \sum_{i=1}^K(\theta^\star-\theta_i)(N_i(T)-1)&\leq \frac{3\beta}{2}\sum_{i=1}^{K}\sqrt{N_i(T)-1}+\lambda\!\sum_{i:N_i(T)>1} (N_i(T)-1)\log\bigg(\frac{T}{N_i(T)-1}\bigg)\label{eq:gap_non_initial_1}\\
    &\leq \frac{3\beta}{2}\sqrt{KT}+\lambda (T-K)\log \bigg(\frac{KT}{T-K}\bigg).\label{eq:gap_non_initial_2}
\end{align}
In \eqref{eq:gap_non_initial_1}, we take $t$ in~\eqref{eq:gap_per_pull} to be the last time arm $i$ is pulled, so that the previous pull count equals $N_i(T)-1$. The second inequality~\eqref{eq:gap_non_initial_2} uses $\sum_{i=1}^{K}N_i(T)=T$. The first term follows from the Cauchy--Schwarz inequality and the second term follows from the concavity of $x\mapsto x\log(T/x)$.

Combining the initial phase contribution~\eqref{eq:initial_pull_regret_bound} and the noninitial phase contribution~\eqref{eq:gap_non_initial_2}, substituting $\lambda=\alpha\beta\sqrt{K/T}$, we have
\begin{align} 
    \sum_{t=1}^T (\theta^\star-\theta_{A_t})&\leq \beta\sqrt{KT}+\frac{3\beta}{2}\sqrt{KT}+\alpha\beta\sqrt{KT}\cdot \frac{T-K}{T}\log \bigg(\frac{KT}{T-K}\bigg)\\
    &\leq \bigg(3+\alpha\log K +\frac{\alpha(T-K)}{T}\log \Big(\frac{T}{T-K}\Big)\bigg)\beta\sqrt{KT}\\
    &\leq \big(3+\alpha\log(2K)\big)\beta\sqrt{KT}.\label{eq:clipped_regret_term_bound}
\end{align}
The last line holds because $(T-K)/T <1$ and $x\log(1/x)<1/e<\log 2$ when $x<1$.
Substituting \eqref{eq:clipped_regret_term_bound} into \eqref{eq:regret_on_good_event_1}, Term (I) is bounded by
\begin{equation}
\mathbb E\bigg[\mathbf 1_{\mathcal G}\sum_{i=1}^K(\mu^\star-\mu_i)N_i(T)\bigg]
\le
\big(3+\alpha\log(2K)\big)\beta\sqrt{KT}
+2\sigma T(KT)^{-6}.
\label{eq:term_one_regret_bound}
\end{equation}

\par\smallskip\noindent{\textbf{Term (II) in~\eqref{eq:regret_3}.}}
On $\mathcal G^c$, each mean gap is at most $2M$, since the true means are bounded in $[-M,M]$. Thus, the regret is at most $2MT$. According to Lemma~\ref{lem:clipped_concentration}, the probability of $\mathcal G^c$ is at most $2(KT)^{-7}$, so Term (II) can be bounded by
\begin{equation}
    \mathbb E\bigg[\mathbf 1_{\mathcal G^c}\sum_{i=1}^K(\mu^\star-\mu_i)N_i(T)\bigg]\le 2MT\cdot 2(KT)^{-7}=4MT(KT)^{-7}.
\label{eq:term_two_regret_bound}
\end{equation}
\par\smallskip\noindent{\textbf{Combining the bounds.}}
Combining~\eqref{eq:term_one_regret_bound} and~\eqref{eq:term_two_regret_bound}, we obtain
\begin{align}
\mathcal R_{K,T}(\nu)
&\le
\big(3+\alpha\log(2K)\big)\beta\sqrt{KT}
+2\sigma T(KT)^{-6}
+4MT(KT)^{-7} \notag\\
&\le
\bigl(5+\alpha\log(2K)\bigr)\beta\sqrt{KT}.
\label{eq:regret_final}
\end{align}
The last inequality uses basic algebraic manipulations.
By the definitions $\beta=8B\iota$ and $B=M+4\sigma\iota$, together with $\iota\ge1$, we have $\sigma\le \beta$ and $M\le \beta$.
Thus, for $K,T\ge2$, we have $2\sigma T(KT)^{-6}\leq \beta\sqrt{KT}$ and $4MT(KT)^{-7}\leq \beta\sqrt{KT}$. 
Taking the supremum over all admissible instances proves \eqref{eq:main_performance_bounds}.
\end{proof}

\section{Instability Analysis}
\label{sec:instability}
This section proves the instability bound in~\eqref{eq:main_performance_bounds}. Fix an admissible instance \(\nu\) and an arm \(i\in[K]\).
Recall from Section~\ref{sec:sle_ucb} that \(Y=(Y_{a,m})_{a\in[K],\,m\in[T]}\) denotes the full array of independent clipped potential rewards. 
We use \(\mathcal G(Y)\) to denote the event that \(Y\) satisfies
the concentration condition in~\eqref{eq:uniform_concentration}.
Since \(N_i(T)\) is determined by the clipped reward array \(Y\) and the random tie-breaking permutation \(\omega\), we write the final count as \(N_i(Y,\omega,T)\).
Applying the law of total variance gives
\begin{equation} 
\operatorname{Var}(N_i(Y,\omega,T))
=
\underbrace{\operatorname{Var}_{\omega}\!\left(\mathbb E_Y[N_i(Y,\omega,T)\mid \omega]\right)}_{\text{Term ($\star$)}}
+
\underbrace{\mathbb E_{\omega}\!\left[\operatorname{Var}_Y(N_i(Y,\omega,T)\mid \omega)\right]}_{\text{Term ($\star\star$)}}.
\label{eq:var:decomp}
\end{equation}
Here and throughout this section, all randomness is induced only by the clipped reward array $Y$ and the random tie-breaking permutation $\omega$. 
The subscripts on expectations and variances specify which of these two sources of randomness is being averaged over: $\mathbb E_\omega$ and $\operatorname{Var}_\omega$ are taken only with respect to $\omega$, while $\mathbb E_Y$ and $\operatorname{Var}_Y$ are taken only with respect to $Y$; the other source of randomness is held fixed or conditioned on as displayed.
We will bound Term ($\star$) in Section~\ref{sec:term_star_bound} and Term ($\star\star$) in Section~\ref{sec:term_starstar_bound}, respectively.

The key device is an offline top-prefix representation of the \textup{\textsc{SLE-UCB}} selection rule. Conditional on the permutation $\omega$, the final count $N_i(Y,\omega,T)$ is a deterministic function of the reward array $Y$, so the Efron--Stein inequality reduces its conditional variance to {single-coordinate perturbations} of $Y$. A direct online analysis is difficult because a single perturbation may alter all subsequent decisions. The offline representation removes this path dependence, thereby enabling the required perturbation analysis. We begin by establishing the online--offline equivalence.
\subsection{Online--offline equivalence}
\label{sec:online_offline_equivalence}
\begin{figure}[t!]
\centering
\begingroup
\definecolor{offlinecolor}{RGB}{0,92,155}
\definecolor{onlinecolor}{RGB}{190,72,0}
\begin{tikzpicture}[x=1cm,y=1cm,font=\footnotesize,>=latex,line cap=round,
  offblock/.style={draw=offlinecolor,fill=white,rounded corners=1pt,
    minimum width=1.00cm,minimum height=.52cm,inner sep=1pt,
    line width=.95pt,font=\scriptsize},
  offselected/.style={offblock,fill=offlinecolor!11},
  onblock/.style={draw=onlinecolor,fill=white,rounded corners=1pt,
    minimum width=1.00cm,minimum height=.52cm,inner sep=1pt,
    line width=.95pt,font=\scriptsize},
  onselected/.style={onblock,fill=onlinecolor!13,font=\scriptsize\bfseries},
  rankbadge/.style={circle,fill=offlinecolor,text=white,inner sep=0pt,
    minimum size=3.5mm,font=\tiny\bfseries}
]
\draw[black!22,line width=.55pt] (7.55,.22)--(7.55,5.48);

\node[anchor=west,font=\bfseries] at (.15,5.35) {(a) Offline top-prefix rule};
\node[anchor=west,font=\scriptsize,text=offlinecolor] at (.15,4.82)
  {compute all $V_{i,n}=I_i(n-1)$ and rank them globally};

\node[anchor=west,font=\scriptsize\bfseries] at (.18,4.15) {Initialization};
\node[offselected] at (2.75,4.15) {$(1,1)$};
\node[offselected] at (3.95,4.15) {$(2,1)$};
\node[offselected] at (5.15,4.15) {$(3,1)$};

\node[anchor=west,font=\scriptsize\bfseries] at (.18,3.65) {Post-initial items};
\foreach \x/\n in {1.55/2,3.05/3,4.55/4,6.05/5}{
  \node[font=\scriptsize\bfseries] at (\x,3.34) {$n=\n$};
}
\foreach \y/\i in {2.84/1,2.12/2,1.40/3}{
  \node[anchor=e,font=\scriptsize,text=black!68] at (.92,\y) {arm $\i$};
}
\node[offselected] at (1.55,2.84) {$(1,2)$}; \node[rankbadge] at (1.97,3.06) {2};
\node[offselected] at (3.05,2.84) {$(1,3)$}; \node[rankbadge] at (3.47,3.06) {4};
\node[offblock,text=black!58] at (4.55,2.84) {$(1,4)$};
\node[offblock,text=black!58] at (6.05,2.84) {$(1,5)$};
\node[offselected] at (1.55,2.12) {$(2,2)$}; \node[rankbadge] at (1.97,2.34) {1};
\node[offselected] at (3.05,2.12) {$(2,3)$}; \node[rankbadge] at (3.47,2.34) {5};
\node[offblock,text=black!58] at (4.55,2.12) {$(2,4)$};
\node[offblock,text=black!58] at (6.05,2.12) {$(2,5)$};
\node[offselected] at (1.55,1.40) {$(3,2)$}; \node[rankbadge] at (1.97,1.62) {3};
\node[offblock,text=black!58] at (3.05,1.40) {$(3,3)$};
\node[offblock,text=black!58] at (4.55,1.40) {$(3,4)$};
\node[offblock,text=black!58] at (6.05,1.40) {$(3,5)$};

\node[font=\scriptsize,text=offlinecolor,align=center] at (3.80,.54)
  {\textbf{select:} $(2,2),(1,2),(3,2),(1,3),(2,3)$};

\node[anchor=west,font=\bfseries] at (8.02,5.35) {(b) Online greedy rule};
\node[anchor=west,font=\scriptsize,text=onlinecolor] at (8.02,4.82)
  {choose an arm with the largest current value};

\node[anchor=west,font=\scriptsize\bfseries] at (8.02,4.15) {Initialization};
\node[onselected] at (10.60,4.15) {$(1,1)$};
\node[onselected] at (11.80,4.15) {$(2,1)$};
\node[onselected] at (13.00,4.15) {$(3,1)$};

\node[anchor=west,font=\scriptsize\bfseries] at (8.02,3.65)
  {Next available item from each arm};
\foreach \x/\t in {9.35/4,10.70/5,12.05/6,13.40/7,14.75/8}{
  \node[font=\scriptsize\bfseries] at (\x,3.29) {$t=\t$};
}
\foreach \y/\i in {2.84/1,2.12/2,1.40/3}{
  \node[anchor=e,font=\scriptsize,text=black!68] at (8.75,\y) {arm $\i$};
}

\node[onblock]    (q14) at (9.35,2.84) {$(1,2)$};
\node[onselected] (q24) at (9.35,2.12) {$(2,2)$};
\node[onblock]    (q34) at (9.35,1.40) {$(3,2)$};
\node[onselected] (q15) at (10.70,2.84) {$(1,2)$};
\node[onblock]    (q25) at (10.70,2.12) {$(2,3)$};
\node[onblock]    (q35) at (10.70,1.40) {$(3,2)$};
\node[onblock]    (q16) at (12.05,2.84) {$(1,3)$};
\node[onblock]    (q26) at (12.05,2.12) {$(2,3)$};
\node[onselected] (q36) at (12.05,1.40) {$(3,2)$};
\node[onselected] (q17) at (13.40,2.84) {$(1,3)$};
\node[onblock]    (q27) at (13.40,2.12) {$(2,3)$};
\node[onblock]    (q37) at (13.40,1.40) {$(3,3)$};
\node[onblock]    (q18) at (14.75,2.84) {$(1,4)$};
\node[onselected] (q28) at (14.75,2.12) {$(2,3)$};
\node[onblock]    (q38) at (14.75,1.40) {$(3,3)$};

\foreach \u/\v in {q14/q15,q34/q35,q25/q26,q35/q36,q16/q17,q26/q27,q27/q28,q37/q38}{
  \draw[->,black!28,line width=.5pt] (\u.east)--(\v.west);
}
\foreach \u/\v in {q24/q25,q15/q16,q36/q37,q17/q18}{
  \draw[->,onlinecolor,line width=.95pt] (\u.east)--(\v.west);
}

\node[font=\scriptsize,text=onlinecolor,align=center] at (11.575,.54)
  {\textbf{choose arm:} $2\rightarrow1\rightarrow3\rightarrow1\rightarrow2$};

\end{tikzpicture}
\endgroup
\caption{Offline and online selection ($K=3$, $T=8$).
An item \((i,n)\) represents the \(n\)-th pull of arm \(i\), has value \(V_{i,n}=I_i(n-1)\), and is post-initial when \(n\ge2\).
Consider a permutation $(\omega(1),\omega(2),\omega(3))=(2,3,1)$. Assume
$I_2(1)>I_1(1)>I_3(1)=I_1(2)>I_2(2)$ and all other indices are smaller.
For offline selection, after sorting $V_{i,n}=I_i(n-1)$ by decreasing value and resolving the tie using $\omega$, the first five items in the offline post-initial order are
$(2,2),(1,2),(3,2),(1,3),(2,3)$. For online selection, the first five choices after initialization are $2\to1\to3\to1\to2$, which corresponds to the same items $(2,2),(1,2),(3,2),(1,3),(2,3)$ as in the offline selection.}
\label{fig:online_offline_equivalence}
\end{figure}
Conditional on a reward array $Y$ and a permutation $\omega$, all stabilized indices are deterministic.
We introduce the item set $\mathcal I_T=\{(i,n):i\in[K],\,1\le n\le T\}$
and define two selection rules on it, where $(i,n)$ represents the $n$-th pull of arm $i$. 

For each item $(i,n)$, define its value as
$V_{i,n}=I_i(n-1)$, where $I_i(0)=\infty$.
Sort all items in $\mathcal I_T$ by decreasing value $V_{i,n}$, resolving ties by the smaller priority rank $\omega(i)$.

For a fixed number of rounds $T\ge K$, compare the following two selection rules:
\begin{itemize}[leftmargin=*]
    \item \textbf{Offline top-prefix rule.} Select top $T$ items in the sorted list. These consist of the $K$ initial items $\{(i,1)\}_{i\in[K]}$ and the top $T-K$ post-initial items.
    \item \textbf{Online greedy rule (\textup{\textsc{SLE-UCB}} selection rule).} Select $\{(i,1)\}_{i\in[K]}$ during the first $K$ rounds. At each later round $t>K$, choose an arm with the largest current value $I_i(N_i(t-1))$ and select the corresponding next item.
\end{itemize}
Figure~\ref{fig:online_offline_equivalence} illustrates the selection processes for both rules. With the following lemma, we show that the two rules select the same items.
\begin{lemma}[Online--offline equivalence]
\label{lem:online_offline_equivalence}
{For every reward array $Y$ and permutation $\omega$, the item set selected by the online greedy rule up to time $T\ge K$ is identical to the item set selected by the offline top-prefix rule.}
\end{lemma}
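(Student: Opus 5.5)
We will prove the lemma by showing that the online greedy rule picks the post-initial items in exactly the order of the global sorted list, so that after $t-K$ post-initial rounds it has selected precisely the top $t-K$ post-initial items. Throughout, $Y$ and $\omega$ are fixed, so every value $V_{i,n}=I_i(n-1)$ is deterministic. We use the strict total order $\succ$ on items: $(i,n)\succ(j,m)$ iff $V_{i,n}>V_{j,m}$, or the values are equal and $\omega(i)<\omega(j)$. The one remaining case, equal values and $i=j$, cannot occur for $n\neq m$. Indeed, by~\eqref{eq:index_gap}, $I_i$ is strictly decreasing, so for each fixed arm $i$ the values satisfy $V_{i,1}=\infty>V_{i,2}>V_{i,3}>\cdots$. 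Hence $\succ$ is a strict total order on $\mathcal I_T$, and the sorted list is well defined. Moreover, within each arm the items appear in increasing order of $n$. The $K$ initial items have value $\infty$, so they occupy the first $K$ positions, and both rules select all of them. It therefore suffices to compare the post-initial selections.

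The key claim, proved by induction on $t\in\{K,\dots,T\}$, is the following. The set $S_t$ of items selected online after $t$ rounds equals the set $P_t$ of the top $t$ items in the sorted list. Moreover, $S_t$ is a \emph{downward-closed} set of the form $\{(i,n):n\le N_i(t)\}$. The base case $t=K$ is immediate. For the inductive step, assume $S_{t-1}=P_{t-1}$. The candidates available online at round $t$ are the \emph{frontier} items $(i,N_i(t-1)+1)$, whose values are $I_i(N_i(t-1))$.

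The online rule picks the $\succ$-maximal frontier item: it maximizes the value and breaks ties by the smallest $\omega(i)$, which is exactly $\succ$. It remains to show that this item is also the $\succ$-maximal item of $\mathcal I_T\setminus P_{t-1}$, i.e., the $t$-th item in the sorted list. Every unselected item $(i,m)$ has $m\ge N_i(t-1)+1$, so by within-arm monotonicity $(i,N_i(t-1)+1)\succeq(i,m)$. Hence the maximum of $\succ$ over all unselected items is attained on the frontier, and the two rules pick the same item. Downward-closedness is preserved because we add the next item of one arm. Since $P_t=P_{t-1}\cup\{\text{$t$-th sorted item}\}$, the induction closes, and at $t=T$ the two selected sets coincide.

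The only delicate point is the tie-handling. We must check that breaking ties in the offline sort by $\omega(i)$ reproduces the online argmax exactly, and that no tie can ever involve two items of the same arm. The strict decrease of $I_i(n)$ from~\eqref{eq:index_gap} settles both. The rest is routine bookkeeping, together with the convention $I_i(0)=\infty$ for the initial items and the requirement that $T\le|\mathcal I_T|$ so that the top-$T$ prefix exists.
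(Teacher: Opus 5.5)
Your proof is correct and follows essentially the same route as the paper. Both argue by induction on the number of post-initial rounds, using the strict within-arm decrease of $V_{i,n}=I_i(n-1)$ to show that the top unselected item in the global sorted list lies on the frontier $\{(i,N_i(t-1)+1)\}_{i\in[K]}$, where the online rule with the same $\omega$ tie-breaking also selects it. Your explicit strict total order $\succ$, together with the check that two items of the same arm can never tie, makes precise what the paper uses implicitly.
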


\begin{proof}
{Both rules select $\{(i,1)\}_{i\in[K]}$, so it remains to compare the remaining $T-K$ post-initial selections. Let $S_r^{\mathrm{off}}$ be the first $r$ post-initial items in the offline sorted list, and let $S_r^{\mathrm{on}}$ be the first $r$ post-initial items selected by the online greedy rule. We prove by induction that}
\begin{equation}
{
S_r^{\mathrm{on}}=S_r^{\mathrm{off}},\qquad 0\le r\le T-K.
}
\label{eq:online_offline_equivalence}
\end{equation}
The claim is immediate for $r=0$. Suppose it holds for some $r<T-K$, and let $(a,n)$ be the next item in the offline sorted list. By the induction hypothesis, the first $r$ post-initial items selected by the two rules form the same set. Hence, for each arm $i$, the offline rule has selected all items $(i,j)$ with $j\le N_i(K+r)$, and its highest-ranked unselected item from that arm is $(i,N_i(K+r)+1)$ because $V_{i,j}=I_i(j-1)$ is strictly decreasing in $j$. Consequently, $(a,n)$ is the item with the largest value among the items $(i,N_i(K+r)+1)$, $i\in[K]$, with ties resolved by $\omega$. At round $K+r+1$, the online greedy rule compares exactly these same items and uses the same tie-breaking rule, so it also selects $(a,n)$. This proves the induction step.
\end{proof}

{Directly under the online rule, changing the tie-breaking permutation or replacing a single reward may alter the next selected arm and thereby propagate through the entire subsequent decision trajectory, making the sensitivity of the final pull counts difficult to track. With the online--offline equivalence established, we use the offline top-prefix representation in both parts of the variance analysis. In Section~\ref{sec:term_star_bound}, it isolates the effect of random tie-breaking and shows that changing the priority permutation can alter the final count of any arm by at most one. In Section~\ref{sec:term_starstar_bound}, it converts a single-coordinate reward perturbation into a movement of the offline selection boundary, thereby avoiding a recursive analysis of subsequent online decisions.}

\subsection[Bound of Term (*) in the variance decomposition]{Bound of Term ($\star$) in \eqref{eq:var:decomp}}
\label{sec:term_star_bound}
\begin{proof}
Fix a clipped reward array $Y$, the target arm $i$, and two permutations $\omega$ and $\omega'$. By Lemma~\ref{lem:online_offline_equivalence}, under either permutation the algorithm selects the initial items $\{(j,1)\}_{j\in[K]}$ and then selects the top $T-K$ post-initial items according to the sorted item values, with ties resolved by the permutation. Let $c$ be the value of the last selected post-initial item.

Items with value strictly larger than $c$ are selected under both permutations, and items with value strictly smaller than $c$ are selected under neither permutation. Thus, the permutation can affect only items whose value is exactly $c$.
For the target arm $i$, according to~\eqref{eq:index_gap}, the sequence $\{I_i(n)\}_{n=1}^T$ is strictly decreasing. 
Hence arm $i$ has at most one post-initial item with value $c$, and changing the permutation can change the final count of arm $i$ by at most one:
$|N_i(Y,\omega,T)-N_i(Y,\omega',T)|\le1.$
With $\omega$ and $\omega'$ fixed, taking expectations over $Y$ gives
\begin{equation}
\left|
\mathbb E_Y[N_i(Y,\omega,T)\mid\omega]
-
\mathbb E_Y[N_i(Y,\omega',T)\mid\omega']
\right|\le
\mathbb E_Y\!\left[|N_i(Y,\omega,T)-N_i(Y,\omega',T)|\right]
\le1.
\label{eq:conditional_mean_interval}
\end{equation}
Therefore the conditional expectations lie in an interval of length at most one as $\omega$ varies. Since the target arm $i$ was arbitrary, for every $i\in[K]$,
\begin{equation}
\operatorname{Var}_{\omega}\!\left(\mathbb E_Y[N_i(Y,\omega,T)\mid \omega]\right)\le 1.
\label{eq:permutation_variance_bound}
\end{equation}
This provides an upper bound for Term ($\star$).
\end{proof}
\subsection[Bound of Term (**) in the variance decomposition]{Bound of Term ($\star\star$) in \eqref{eq:var:decomp}}
\label{sec:term_starstar_bound}
We fix the permutation $\omega$ and control the sensitivity of the final count of the target arm $i$ to a single-coordinate perturbation of the reward array. The proof has three steps: first, we bound the change in the stabilized indices caused by replacing one clipped reward; second, we translate this index perturbation into a pull-count perturbation; third, we sum the squared influences through the Efron--Stein inequality.
\begin{proof}
Consider clipped reward arrays $Y,Y'\in\mathbb R^{K\times T}$ differing only in the $m$-th clipped reward of arm $a$, where $Y'_{a,m}$ is an independent copy of $Y_{a,m}$. 
Let $\overline Y'_i(n)$, $U'_i(n)$, and $I'_i(n)$ denote the corresponding averages, lower-envelope UCB statistics, and stabilized indices computed under $Y'$.
For simplicity, since $\omega$ and $T$ are fixed in this subsection, we write $N_j(Y)=N_j(Y,\omega,T)$ for every arm $j\in[K]$.
\paragraph{Stabilized-index perturbation.} We first bound how a single-coordinate replacement changes the stabilized indices. For every arm $j\ne a$, the reward array is unchanged, so $I'_j(n)=I_j(n)$ for all $n\le T$. For arm $a$, the index is unchanged for $n<m$, because the replaced reward has not entered the prefix average. For $n\ge m$, we define the index attaining the minimum in $U_a(n)$ under $Y$ as
\begin{equation}
s(n)\in\operatorname*{arg\,min}_{1\le q\le n}\left\{\overline Y_a(q)+\frac{\beta}{\sqrt q}\right\}.
\label{eq:perturbation_minimizer}
\end{equation}
Using the definition of the lower-envelope UCB statistic, we have
\begin{equation}
U'_a(n)\le {\overline Y'_a(s(n))}+\frac{\beta}{\sqrt{s(n)}}\leq \overline Y_a(s(n))+\frac{|Y'_{a,m}-Y_{a,m}|}{s(n)}+\frac{\beta}{\sqrt{s(n)}}=U_a(n)+\frac{|Y'_{a,m}-Y_{a,m}|}{s(n)}.
\label{eq:one_sided_index_change}
\end{equation}
We next show that the minimizer $s(n)$ cannot be much smaller than $n$ on the concentration event $\mathcal G(Y)$.
Since $s(n)$ is a minimizer and $\mathcal G(Y)$ holds, we have
\begin{align}
\theta_a+\frac{\beta}{2\sqrt{s(n)}}
\le
\overline Y_a(s(n))+\frac{\beta}{\sqrt{s(n)}}
\le
\overline Y_a(n)+\frac{\beta}{\sqrt n}\le
\theta_a+\frac{3\beta}{2\sqrt n}.
\label{eq:recent_minimizer_chain_new}
\end{align}
The first and last inequalities use the event $\mathcal G(Y)$, and the chain implies $s(n)\ge n/9$.
\paragraph{Pull-count perturbation bound on $\mathcal G(Y)\cap\mathcal G(Y')$.}
Assume that $\mathcal G(Y)$ and $\mathcal G(Y')$ both hold. Applying $s(n)\ge n/9$ to $Y$, and then interchanging the roles of the two arrays, gives
\begin{equation}
I'_a(n)=I_a(n)\quad\text{if }n<m,
\qquad
|I'_a(n)-I_a(n)|\le \frac{9|Y'_{a,m}-Y_{a,m}|}{n}\quad\text{if }n\ge m.
\label{eq:index_perturbation_bound_new}
\end{equation}
We now convert the index perturbation in~\eqref{eq:index_perturbation_bound_new} into a perturbation bound for the final pull counts.
If $m>N_a(Y)$, then $I'_j(n)=I_j(n)$ for all $j\in[K]$ and $n\le N_j(Y)$, hence the selected item set is unchanged. Thus, for every $j\in[K]$,
\begin{equation}
    N_j(Y)=N_j(Y').
\label{eq:pull_count_unchanged}
\end{equation}
Similarly, if $m>N_a(Y')$, then $I_j(n)=I'_j(n)$ for all $j\in[K]$ and $n\le N_j(Y')$, which implies $N_j(Y)=N_j(Y')$ for every $j\in[K]$.

As for the case $m\le \min\{N_a(Y),N_a(Y')\}$, without loss of generality, we assume that $N_a(Y)<N_a(Y')$.
By Lemma~\ref{lem:online_offline_equivalence}, both runs are equivalent to top-prefix selections under the same permutation.
Thus, $(a,N_a(Y)+1)$ is not selected under $Y$, while $(a,N_a(Y'))$ is selected under $Y'$. Since the total number of selected items is fixed, there exists an unchanged arm $j$ such that the item $(j,\ell)$ is selected under $Y$ and not selected under $Y'$. Hence
\begin{equation}
I_a(N_a(Y))\le I_j(\ell-1)=I'_j(\ell-1)\le I'_a(N_a(Y')-1).
\label{eq:boundary_comparison_new}
\end{equation}
Subtracting $I_a(N_a(Y')-1)$ from both sides of \eqref{eq:boundary_comparison_new}, applying \eqref{eq:index_perturbation_bound_new}, and using the logarithmic gap of stabilized indices~\eqref{eq:index_gap}, we obtain
\begin{equation}
\begin{aligned}
\lambda \log\left(\frac{N_a(Y')-1}{N_a(Y)}\right)
&\le I_a(N_a(Y))-I_a(N_a(Y')-1)\\
&\le I'_a(N_a(Y')-1)-I_a(N_a(Y')-1)\\
&\le \frac{9|Y'_{a,m}-Y_{a,m}|}{N_a(Y')-1}.
\end{aligned}
\label{eq:pull_count_gap_1}
\end{equation}
{Using $\log(1+x)\geq x/(1+x)$ for $x\geq0$, we have
\begin{equation}
\lambda \cdot \frac{N_a(Y')-1-N_a(Y)}{N_a(Y')-1}
\le
\frac{9|Y'_{a,m}-Y_{a,m}|}{N_a(Y')-1}.
\label{eq:pull_count_gap_2}
\end{equation}
Exchanging the roles of $Y$ and $Y'$ gives the same bound when $N_a(Y)>N_a(Y')$. Therefore, when $\mathcal G(Y)$ and $\mathcal G(Y')$ both hold, we have
$|N_a(Y)-N_a(Y')|\le (1+9|Y'_{a,m}-Y_{a,m}|/\lambda)\mathbf 1_{\{m\le \min\{N_a(Y),N_a(Y')\}\}}.$}

For an unchanged arm $j\ne a$, remove arm $a$ and sort all unchanged-arm items in their common order. Conditional on the number of selected items from arm $a$, the selected items from the unchanged arms form a prefix of this remaining list. Hence the count change of every unchanged arm is bounded by the count change of arm $a$. Consequently, when $\mathcal G(Y)$ and $\mathcal G(Y')$ both hold,
{for every $j\in[K]$,}
\begin{equation}
{
|N_j(Y)-N_j(Y')|
{\le\left(1+\frac{9|Y'_{a,m}-Y_{a,m}|}{\lambda}\right)\mathbf 1_{\{m\le \min\{N_a(Y),N_a(Y')\}\}}.}
}
\label{eq:count_change_on_good_event_new}
\end{equation}
\paragraph{Expectation of the pull-count perturbation.}
Using~\eqref{eq:pull_count_unchanged},~\eqref{eq:count_change_on_good_event_new}, and the trivial bound
$|N_i(Y)-N_i(Y')|\le T$ when either $\mathcal G(Y)$ or $\mathcal G(Y')$ fails, we obtain
{
\begin{align}
&\mathbb E_{Y,Y'}\left[(N_i(Y)-N_i(Y'))^2\right]\\
&\qquad\leq\mathbb E_{Y,Y'}\big[(N_i(Y)-N_i(Y'))^2\mathbf 1_{\mathcal G(Y)\cap\mathcal G(Y')}\big]+\mathbb E_{Y,Y'}\big[(N_i(Y)-N_i(Y'))^2\mathbf 1_{\mathcal G(Y)^c\cup\mathcal G(Y')^c}\big]\\
&\qquad\leq \mathbb E_{Y,Y'}\Bigg[\Bigg(1+\frac{9|Y'_{a,m}-Y_{a,m}|}{\lambda}\Bigg)^2\Bigg]\mathbb P(m\leq N_a(Y))+4T^2(KT)^{-7}\label{eq:independence}\\
&\qquad\leq \bigg(1+\frac{9\sqrt{2}\sigma}{\lambda}\bigg)^2\mathbb P(m\leq N_a(Y))+4T^2(KT)^{-7}.
\label{eq:expectation_of_perturbation_new}
\end{align}
The first term in~\eqref{eq:independence} uses the independence of $\mathbf 1_{\{m\le \min\{N_a(Y),N_a(Y')\}\}}$ and $(Y_{a,m},Y'_{a,m})$: under $Y$, whether the $m$-th pull of arm $a$ is selected is determined before $Y_{a,m}$ is observed, and similarly for $Y'$.
The second term uses the union bound and Lemma~\ref{lem:clipped_concentration} to bound the probability of $\mathcal G(Y)^c\cup\mathcal G(Y')^c$. 
The last inequality uses Minkowski's inequality and bounds $\mathbb E_{Y,Y'}[(Y_{a,m}-Y'_{a,m})^2]$ by twice the variance of raw rewards.}
\paragraph{Conditional variance bound.} 
We now apply the Efron--Stein inequality in \citet{efron1981jackknife}, in the replacement form stated in
\citet[Theorem~3.1]{boucheron2013concentration} to the reward array and bound the conditional variance using the preceding pull-count estimate.

\begin{lemma}[Efron--Stein inequality]
\label{lem:efron_stein_new}
Let $X_1,\ldots,X_n$ be independent random variables, and let $X'_\ell$ be an independent copy of $X_\ell$ for all $\ell\in[n]$. Let $F=f(X_1,\ldots,X_n)$ be square-integrable. For $\ell\in[n]$, define $F^{(\ell)}=f(X_1,\ldots,X_{\ell-1},X'_\ell,X_{\ell+1},\ldots,X_n)$. Then we have
\begin{equation}
\operatorname{Var}(F)
\le
\frac12\sum_{\ell=1}^n
\mathbb E\left[(F-F^{(\ell)})^2\right].
\label{eq:efron_stein_new}
\end{equation}
\end{lemma}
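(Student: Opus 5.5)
The plan is to give the standard Efron--Stein argument for Lemma~\ref{lem:efron_stein_new}, via a martingale (Doob) decomposition. We may assume $F$ has finite variance, so each conditional expectation below is well defined and square-integrable. Write $\mathbb E_\ell[\cdot]=\mathbb E[\cdot\mid X_1,\ldots,X_\ell]$, with $\mathbb E_0=\mathbb E$. Define the martingale differences
\[
\Delta_\ell=\mathbb E_\ell[F]-\mathbb E_{\ell-1}[F],\qquad \ell\in[n].
\]
These telescope to $F-\mathbb E F=\sum_{\ell=1}^n\Delta_\ell$. They are orthogonal in $L^2$, since for $k<\ell$ we have $\mathbb E[\Delta_k\Delta_\ell]=\mathbb E[\Delta_k\,\mathbb E_{\ell-1}\Delta_\ell]=0$. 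Hence $\operatorname{Var}(F)=\sum_\ell\mathbb E[\Delta_\ell^2]$. It then suffices to prove, for each fixed $\ell$,
\[
\mathbb E[\Delta_\ell^2]\le\tfrac12\,\mathbb E\bigl[(F-F^{(\ell)})^2\bigr].
\]

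The key step uses independence. Let $\mathbb E^{(\ell)}$ denote expectation over $X_\ell$ only, with all other coordinates held fixed. By independence of the coordinates,
\[
\mathbb E_{\ell-1}[F]=\mathbb E_\ell\bigl[\mathbb E^{(\ell)}F\bigr],
\]
because integrating out $X_\ell$ first and then $X_{\ell+1},\ldots,X_n$ is the same as integrating out $X_\ell,\ldots,X_n$. Therefore $\Delta_\ell=\mathbb E_\ell\bigl[F-\mathbb E^{(\ell)}F\bigr]$. Conditional Jensen then gives
\[
\mathbb E[\Delta_\ell^2]\le\mathbb E\bigl[(F-\mathbb E^{(\ell)}F)^2\bigr]=\mathbb E\bigl[\operatorname{Var}^{(\ell)}(F)\bigr],
\]
where $\operatorname{Var}^{(\ell)}$ is the variance over $X_\ell$ with the other coordinates fixed.

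Finally, condition on all coordinates except $X_\ell$. Then $F$ and $F^{(\ell)}$ are i.i.d. copies of the same function of $X_\ell$, because $X'_\ell$ is an independent copy of $X_\ell$. For i.i.d. square-integrable $Z,Z'$ we have $\operatorname{Var}(Z)=\tfrac12\mathbb E[(Z-Z')^2]$. Applying this gives
\[
\operatorname{Var}^{(\ell)}(F)=\tfrac12\,\mathbb E^{(\ell)}\bigl[(F-F^{(\ell)})^2\bigr].
\]
Taking expectations and summing over $\ell$ yields \eqref{eq:efron_stein_new}.

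The only delicate point is the identity $\mathbb E_{\ell-1}F=\mathbb E_\ell[\mathbb E^{(\ell)}F]$. It is a Fubini argument on the product measure and relies essentially on independence. Everything else is routine.
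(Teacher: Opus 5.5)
Your proof is correct. The Doob-martingale decomposition, the Fubini identity $\mathbb E_{\ell-1}F=\mathbb E_\ell[\mathbb E^{(\ell)}F]$, conditional Jensen, and the identity $\operatorname{Var}(Z)=\frac12\mathbb E[(Z-Z')^2]$ for i.i.d. $Z,Z'$ together give \eqref{eq:efron_stein_new}; the last step uses that $X'_\ell$ is independent of all of $X_1,\ldots,X_n$, which is the standard reading of the hypothesis. The paper does not prove this lemma itself but cites \citet{efron1981jackknife} and \citet[Theorem~3.1]{boucheron2013concentration}, and your argument is essentially the standard proof given in the latter reference.
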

{For a coordinate $(a,m)$ satisfying $a\in[K]$ and $m\in[T]$, let $Y^{(a,m)}$ be the array obtained by replacing $Y_{a,m}$ by an independent copy $Y'_{a,m}$ and leaving every other coordinate unchanged. Using Lemma~\ref{lem:efron_stein_new} and~\eqref{eq:expectation_of_perturbation_new}, with $\omega$ fixed, the conditional variance of the fixed target count can be bounded by}
\begin{align}
\operatorname{Var}_Y(N_i(Y,\omega,T)\mid\omega)
&\le
\frac12\sum_{a=1}^K\sum_{m=1}^T
\mathbb E_{Y,Y^{(a,m)}}\left[\left(N_i(Y,\omega,T)-N_i(Y^{(a,m)},\omega,T)\right)^2\right]\label{eq:conditional_variance_bound_1}\\
&{\leq \frac{(1+9\sqrt2\,\sigma/\lambda)^2}{2}\sum_{a=1}^K\sum_{m=1}^T\mathbb P_Y(m\le N_a(Y,\omega,T))+2T^2(KT)^{-6}}\label{eq:conditional_variance_bound_2}\\
&{=\frac{(1+9\sqrt2\,\sigma/\lambda)^2}{2}\sum_{a=1}^K\mathbb E_Y[N_a(Y,\omega,T)]+2T^2(KT)^{-6}}\label{eq:conditional_variance_bound_3}\\
&{=\frac{(1+9\sqrt2\,\sigma/\lambda)^2T}{2}+2T^2(KT)^{-6}\leq (1+9\sqrt2\,\sigma/\lambda)^2T}\label{eq:conditional_variance_bound_4}
\end{align}
{The first inequality~\eqref{eq:conditional_variance_bound_1} applies the Efron--Stein inequality over the replacement coordinates $(a,m)$. The second inequality~\eqref{eq:conditional_variance_bound_2} uses the single-coordinate perturbation bound in~\eqref{eq:expectation_of_perturbation_new}. The last line~\eqref{eq:conditional_variance_bound_4} uses $\sum_{a=1}^K\mathbb E_Y[N_a(Y,\omega,T)] = T$.}
Since $K,T\ge2$, we have $2T^2(KT)^{-6}\le T/2\le (1+9\sqrt{2}\sigma/\lambda)^2T/2$, so the second term in~\eqref{eq:conditional_variance_bound_2} is absorbed into the leading term.

Taking expectation over $\omega$, we can bound Term ($\star\star$) by
\begin{equation}
\mathbb E_{\omega}\!\left[\operatorname{Var}_Y(N_i(Y,\omega,T)\mid \omega)\right]
\le
{(1+9\sqrt2\,\sigma/\lambda)^2T}
.\label{eq:term_starstar_bound}
\end{equation}
This proves the desired bound for Term ($\star\star$).
\end{proof}

\subsection{Combining the bounds}
\label{sec:instability_combine}
Combining the variance decomposition~\eqref{eq:var:decomp}, the bound for Term ($\star$) in~\eqref{eq:permutation_variance_bound}, and the bound for Term ($\star\star$) in~\eqref{eq:term_starstar_bound}, we obtain that for every arm $i\in[K]$,
\begin{equation*}
  \operatorname{Var}(N_i(Y,\omega,T))\le 1+(1+9\sqrt2\,\sigma/\lambda)^2T \le 4\,(1+9\sqrt2\,\sigma/\lambda)^2T.
\end{equation*}
The last relaxation uses $T\ge1$. Substituting $\lambda=\alpha\beta\sqrt{K/T}$, we have
\begin{equation*}
  \mathcal{S}_{K,T}(\alpha)\le 2\sqrt T+\frac{18\sqrt 2\sigma T}{\alpha\beta\sqrt{K}}.
\end{equation*}
Because this bound is uniform over admissible instances \(\nu\) and arms \(i\), taking the maximum over arms and then the supremum over instances proves the instability bound in~\eqref{eq:main_performance_bounds}.

\section{\texorpdfstring{Proof of the \(K\)-armed lower bound}{Proof of the K-armed lower bound}}
\label{sec:lower_bound_comparison}

\begin{remark}[Fixed-\(K\) reduction]
\label{rem:fixed_k_reduction}
We establish the fixed-\(K\) extension in~\eqref{eq:chen_lu_fixed_k_implication} through the following padding reduction.
Pad the instance with \(K-2\) independent \(\mathcal N(-M,\sigma^2)\) dummy arms. A two-armed algorithm \(\mathcal B\) simulates a \(K\)-armed algorithm \(\mathcal A\) exactly on the first two arms. On dummy-arm requests, it alternates between the real arms from a uniformly random starting arm, discards the reward, and supplies an independent dummy reward. This gives \(\mathcal R_{2,T}^{\mathcal B}\le\mathcal R_{K,T}^{\mathcal A}\) and \(\mathcal S_{2,T}^{\mathcal B}\le\mathcal S_{K,T}^{\mathcal A}+1/2\) while preserving the required conditions. Since \(\mathcal R_{2,T}^{\mathcal B}\le\mathcal R_{K,T}^{\mathcal A}=o(T)\), their theorem gives \(\liminf_{T\to\infty}\mathcal R_{2,T}^{\mathcal B}\mathcal S_{2,T}^{\mathcal B}/T^{3/2}>C_0\) for some constant \(C_0>0\).
Together with the two inequalities above, this yields the fixed-\(K\) conclusion.
\end{remark}

\begin{proof}[Proof of Theorem~\ref{thm:lower_bound}]
Our proof largely follows the approach of \citet{chenlu2026}.
Fix $K\ge2$, \(T>K\). Let \(\mathcal A\) be an algorithm whose worst-case regret satisfies \(\mathcal R_{K,T}\le MT/8\).
Consider the all-zero Gaussian instance \(\nu(0)=(\mathcal N(0,\sigma^2))_{i=1}^K\), where the reward distributions of all arms are $\mathcal N(0,\sigma^2)$.
Choose an arm attaining the smallest expected pull count under this
instance and denote it by arm $j$.
{We consider the following family of instances, in which arm \(j\) is the unique optimal arm whenever \(\Delta>0\):
\begin{equation}
(\nu(\Delta))_i
=\begin{cases}
\mathcal N(0,\sigma^2), & i=j,\\
\mathcal N(-\Delta,\sigma^2), & i\ne j.
\end{cases}
\label{eq:equal_gap_gaussian_family}
\end{equation}
}
We aggregate all suboptimal pulls. Suppressing the dependence on $\mathcal A$, let $\mathbb P_\Delta$, $\mathbb E_\Delta$, and $\operatorname{Var}_\Delta$ denote probability, expectation, and variance under the interaction with $\nu(\Delta)$, and define
\begin{equation}
N_-(T):=\sum_{i\neq j} N_i(T)=T-N_j(T),
\qquad
G(\Delta):=\mathbb E_\Delta[N_-(T)].
\label{eq:aggregate_suboptimal_count}
\end{equation}
The equal-gap construction gives
\begin{equation}
\mathcal R_{K,T}(\nu(\Delta))=\Delta \cdot G(\Delta),
\qquad
G(0)=T-\mathbb E_0[N_j(T)]\ge\frac{K-1}{K}T,
\label{eq:equal_gap_regret_identity}
\end{equation}
The inequality for $G(0)$ follows because arm $j$ has the smallest expected pull count at $\Delta=0$.
Define a candidate gap by
\begin{equation}
\Delta_\star:=\frac{8\mathcal R_{K,T}}{T}.
\end{equation}
The regret requirement \(\mathcal R_{K,T}\le MT/8\) gives \(\Delta_\star\le M\), and hence \(\nu(\Delta_\star)\) is admissible.
We connect the all-zero instance to \(\nu(\Delta_\star)\) through a sequence of nearby instances. The increments are chosen so that the KL divergence between the interaction histories at consecutive gaps is at most \(1/2\). Set $\Delta_0=0$ and, recursively, define
\begin{equation}
\Delta_{\ell+1}
:=
\min\left\{
\Delta_\star,
\Delta_\ell+\frac{\sigma}{\sqrt{G(\Delta_\ell)}}
\right\}.
\label{eq:delta_chain_definition}
\end{equation}
At the zero instance, \(G(0)>0\), and hence $\mathbb P_0(N_-(T)>0)>0.$ Only the rewards from arms \(i\ne j\) differ between \(\nu(0)\) and \(\nu(\Delta)\). For every finite \(\Delta\), the chain rule for KL divergence and the Gaussian KL formula therefore give
$
D_{\rm KL}(\mathbb P_0\|\mathbb P_\Delta)
=G(0)\Delta^2/(2\sigma^2)
<\infty,$
so \(\mathbb P_0\ll\mathbb P_\Delta\). Consequently, \(\mathbb P_\Delta(N_-(T)>0)>0\), and thus \(G(\Delta)>0\). Therefore, the sequence \(\{\Delta_\ell\}_{\ell\ge0}\) is well defined.

Since $G(\Delta)\le T$, every untruncated increment in~\eqref{eq:delta_chain_definition} is at least $\sigma/\sqrt T$. Thus the sequence reaches $\Delta_\star$ after finitely many steps. Let $L$ be the first index at which the sequence reaches $\Delta_\star$, namely,
$L:=\min\{\ell\ge 1:\Delta_\ell=\Delta_\star\}.$
Since only the final increment can be truncated,
\begin{equation}
L
\le
1+\frac{\Delta_\star\sqrt T}{\sigma}
=
1+\frac{8\mathcal R_{K,T}}{\sigma\sqrt T}.
\label{eq:number_of_steps_bound}
\end{equation}
All points in the sequence belong to $[0,\Delta_\star]$, so all corresponding instances are admissible.
We next establish an order-$T$ gap between the endpoint values of $G$ along this sequence. By~\eqref{eq:equal_gap_regret_identity},
$
\Delta_\star \cdot G(\Delta_\star)
=
\mathcal R_{K,T}(\nu(\Delta_\star))
\le \mathcal R_{K,T}.
$
The definition of $\Delta_\star$ therefore gives
$
G(\Delta_\star)\le T/8.
$

Along this sequence, the triangle inequality implies
\begin{equation}
\sum_{\ell=0}^{L-1}
\left|G(\Delta_{\ell+1})-G(\Delta_\ell)\right|
\ge
\left|G(\Delta_L)-G(\Delta_0)\right|
=\left|G(\Delta_\star)-G(\Delta_0)\right|
\ge
\frac{T}{2}-\frac{T}{8}
=
\frac{3T}{8}.
\label{eq:telescoping_drop}
\end{equation}
Hence, for some $\ell\in\{0,\ldots,L-1\}$, 
$
\left|G(\Delta_{\ell+1})-G(\Delta_\ell)\right|
\ge
3T/(8L).
$

The following lemma is a variance-rescaled variant of Lemma~6.1 in \citet{chenlu2026}, applied to the aggregate count \(N_-(T)\). It compares two nearby Gaussian instances and converts the preceding gap into an instability lower bound.
\begin{lemma}
\label{lem:aggregate_pull_count_comparison}
Let $0\le\Delta<\Delta'\le M$ and $\Delta'\leq \Delta+\sigma/\sqrt{G(\Delta)}$, where $G(\Delta)$ is defined in~\eqref{eq:aggregate_suboptimal_count}. Under any algorithm,
\begin{equation}
\max\left\{
\mathcal S_{K,T}(\nu(\Delta)),
\mathcal S_{K,T}(\nu(\Delta'))
\right\}
\ge
\frac{e^{-1/4}}{4\sqrt{2}}
\left|G(\Delta')-G(\Delta)\right|.
\label{eq:aggregate_pull_count_comparison_bound}
\end{equation}
\end{lemma}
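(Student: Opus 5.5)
We compare the two laws \(\mathbb P_\Delta\) and \(\mathbb P_{\Delta'}\) of the full interaction history via a second-moment (chi-square) change of measure, and apply it to the centered aggregate count. Write \(D:=\Delta'-\Delta\le\sigma/\sqrt{G(\Delta)}\). Only the rewards of arms \(i\neq j\) differ between \(\nu(\Delta)\) and \(\nu(\Delta')\), and each such reward is Gaussian with the same variance \(\sigma^2\). Hence the likelihood ratio \(W:=\mathrm d\mathbb P_{\Delta'}/\mathrm d\mathbb P_\Delta\) on the history is a product over the suboptimal pulls of Gaussian density ratios. The first step is to compute \(\mathbb E_\Delta[W^2]\). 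Conditioning successively on each round, every suboptimal pull contributes a factor \(\exp(D^2/\sigma^2)\). Because the number of such pulls \(N_-(T)\) is random and adaptively chosen, the clean way is to bound \(\mathbb E_\Delta[W^2]\) through the Rényi-2 divergence. I would first try the identity \(\mathbb E_\Delta[W^2]=\mathbb E_{Q}[\exp(D^2 N_-(T)/\sigma^2)]\) under a tilted measure \(Q\). If that proves unwieldy, I will instead use the KL/Pinsker route, which only needs \(D_{\rm KL}(\mathbb P_\Delta\|\mathbb P_{\Delta'})=G(\Delta)D^2/(2\sigma^2)\le 1/2\). The constant \(e^{-1/4}\) suggests the intended tool is the Bretagnolle--Huber type bound or the Cauchy--Schwarz transfer with \(\exp(\mathrm{KL}/2)\), with \(\mathrm{KL}\le1/2\) producing \(e^{-1/4}\).

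\textbf{Key steps.} Set \(Z:=N_-(T)-c\) with \(c:=(G(\Delta)+G(\Delta'))/2\). Then
\[
G(\Delta')-G(\Delta)=\mathbb E_{\Delta'}[Z]-\mathbb E_\Delta[Z],
\]
and \(|\mathbb E_{\Delta}[Z]|=|\mathbb E_{\Delta'}[Z]|=|G(\Delta')-G(\Delta)|/2\). I would lower bound the overlap \(\int\min(\mathrm d\mathbb P_\Delta,\mathrm d\mathbb P_{\Delta'})\) via Bretagnolle--Huber:
\[
\int\min\ge\tfrac12 e^{-\mathrm{KL}}\ge \tfrac12 e^{-1/2},
\]
or, more sharply, via the Bhattacharyya coefficient, which for these Gaussian histories is at least \(e^{-\mathrm{KL}/2}\ge e^{-1/4}\) by Jensen. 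Next, let \(m\) be the common mass. I would write \(\operatorname{Var}_\Delta(N_-)+\operatorname{Var}_{\Delta'}(N_-)\ge\) the variance contribution forced by having the two means separated by \(\delta:=|G(\Delta')-G(\Delta)|\) while the distributions share overlap. Concretely, using Cauchy--Schwarz on \(\int |Z|\sqrt{\mathrm d\mathbb P_\Delta\,\mathrm d\mathbb P_{\Delta'}}\) together with the mean shift, I get
\[
\delta\cdot\mathrm{BC}\le 2\max\bigl(\sqrt{\operatorname{Var}_\Delta N_-},\sqrt{\operatorname{Var}_{\Delta'}N_-}\bigr)\cdot\sqrt2 .
\]
Here \(\mathrm{BC}\) is the Bhattacharyya coefficient. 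The standard Hellinger-type inequality is
\[
|\mathbb E_P f-\mathbb E_Q f|\le \sqrt2\,H(P,Q)\sqrt{2(\operatorname{Var}_P f+\operatorname{Var}_Q f)+\cdots},
\]
and I will pick the cleanest form matching \(1/(4\sqrt2)\).

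Finally, I pass from aggregate to per-arm variance. Since \(N_-(T)=T-N_j(T)\), we have \(\operatorname{Var}(N_-(T))=\operatorname{Var}(N_j(T))\le\mathcal S_{K,T}(\nu)^2\). This is exactly why aggregating all suboptimal arms costs no factor of \(K\).

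\textbf{Main obstacle.} The main obstacle is getting the precise inequality linking the mean difference of \(N_-\), the variances, and a divergence bound controlled by \(G(\Delta)D^2/(2\sigma^2)\le1/2\). The KL uses \(G(\Delta)\), the expected count under the smaller gap, so I must direct the divergence \(D_{\rm KL}(\mathbb P_\Delta\|\mathbb P_{\Delta'})\) correctly. I would first try the Bhattacharyya bound \(\mathrm{BC}\ge e^{-\mathrm{KL}/2}\ge e^{-1/4}\), combined with
\[
|\mathbb E_P Z-\mathbb E_Q Z|\cdot \mathrm{BC}\le \ldots,
\]
to reproduce the stated constant.
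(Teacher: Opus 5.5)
Your concrete chain is correct, and it takes a different route from the paper. The chain is: $D_{\rm KL}(\mathbb P_\Delta\|\mathbb P_{\Delta'})=G(\Delta)(\Delta'-\Delta)^2/(2\sigma^2)\le 1/2$, then $\mathrm{BC}\ge e^{-1/4}$ by Jensen, then a Cauchy--Schwarz transfer, then $\operatorname{Var}(N_-(T))=\operatorname{Var}(N_j(T))$.

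The paper instead thresholds $N_-(T)$ at the midpoint $c=(G(\Delta)+G(\Delta'))/2$. It lower bounds each variance by restricting to the test event $\mathcal A$ or $\mathcal A^c$, each contributing $\delta^2/4$ times a probability. It then applies Bretagnolle--Huber to the test, $\mathbb P_\Delta(\mathcal A)+\mathbb P_{\Delta'}(\mathcal A^c)\ge e^{-1/2}/2$; this is where $e^{-1/4}/(4\sqrt2)$ comes from.

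Your version needs no test event. Because $\mathrm{BC}=\int\sqrt{\mathrm d\mathbb P_\Delta\,\mathrm d\mathbb P_{\Delta'}}$ is symmetric, your concern about the KL direction disappears. Take $\mathbb P_\Delta$ as reference: $\mathrm{BC}=\mathbb E_\Delta[(\mathrm d\mathbb P_{\Delta'}/\mathrm d\mathbb P_\Delta)^{1/2}]\ge\exp(-\tfrac12 D_{\rm KL}(\mathbb P_\Delta\|\mathbb P_{\Delta'}))$. The paper's argument is the standard two-point testing template inherited from Chen and Lu. Yours is shorter and, as shown below, gives a sharper constant.

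Do pin down the transfer step, though. If you center $Z$ at the midpoint $c$, Cauchy--Schwarz on $\int|Z|\sqrt{\mathrm d\mathbb P_\Delta\,\mathrm d\mathbb P_{\Delta'}}$ only yields $\sqrt{\operatorname{Var}_\Delta(N_-(T))+\delta^2/4}$. That bound cannot give the claim. Center at each instance's own mean instead. With $\delta=|G(\Delta')-G(\Delta)|$, pointwise $\delta\le|N_-(T)-G(\Delta)|+|N_-(T)-G(\Delta')|$. Integrate against $\sqrt{\mathrm d\mathbb P_\Delta\,\mathrm d\mathbb P_{\Delta'}}$ and apply Cauchy--Schwarz to each term separately:
\[
e^{-1/4}\,\delta\le\delta\,\mathrm{BC}\le\sqrt{\operatorname{Var}_\Delta(N_-(T))}+\sqrt{\operatorname{Var}_{\Delta'}(N_-(T))}\le 2\max\bigl\{\mathcal S_{K,T}(\nu(\Delta)),\mathcal S_{K,T}(\nu(\Delta'))\bigr\}.
\]
This proves the lemma with constant $e^{-1/4}/2$, which is a factor $2\sqrt2$ larger than stated. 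So there is nothing to tune to ``match'' $1/(4\sqrt2)$; your displayed bound with the extra $\sqrt2$ is true but loose.

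Finally, discard the chi-square idea. By your own identity, $\mathbb E_\Delta[W^2]=\mathbb E_{\Delta+2D}[\exp(D^2N_-(T)/\sigma^2)]$ with $D=\Delta'-\Delta$. The hypothesis only gives $D^2/\sigma^2\le 1/G(\Delta)$, so for an arbitrary algorithm this is bounded by nothing better than $\exp(T/G(\Delta))$. That is unusable when $G(\Delta)\ll T$.
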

{See Section~\ref{sec:proof_aggregate_pull_count_comparison} for a detailed proof.}
Applying Lemma~\ref{lem:aggregate_pull_count_comparison} to the preceding gap and using \eqref{eq:number_of_steps_bound}, we obtain
\begin{align}
\mathcal S_{K,T}
&\ge
\max\left\{
\mathcal S_{K,T}(\nu(\Delta_\ell)),
\mathcal S_{K,T}(\nu(\Delta_{\ell+1}))
\right\}
\notag\\
&
\ge\frac{3e^{-1/4}T}{32\sqrt{2}L}\ge \frac{3e^{-1/4}}{256\sqrt{2}}
\cdot
\frac{T}{1+\mathcal{R}_{K,T}/(\sigma\sqrt T)}.
\label{eq:finite_time_lower_bound_k_arm}
\end{align}
The last inequality relaxes the bound on $L$ in~\eqref{eq:number_of_steps_bound} to $L\le 8(1+\mathcal{R}_{K,T}/(\sigma\sqrt T))$.
Multiplying~\eqref{eq:finite_time_lower_bound_k_arm} by $\mathcal{R}_{K,T}$ gives
\begin{equation}
\mathcal{R}_{K,T}\mathcal S_{K,T}
\ge
\frac{3e^{-1/4}\sigma}{256\sqrt{2}}
\cdot
\frac{\mathcal{R}_{K,T}}{\sigma\sqrt{T}+\mathcal{R}_{K,T}}
\cdot T^{3/2}.
\label{eq:finite_time_product_form}
\end{equation}

The standard Gaussian minimax construction in Exercise~15.2 of \citet{lattimore2020}, after scaling to variance $\sigma^2$ and restricting the means to $[-M,M]$, gives
\begin{equation}
\mathcal R_{K,T}
\ge
\min\left\{\frac{\sigma}{8},\frac{M}{4}\right\}\sqrt{KT}
=
\sigma c_{M,\sigma}\sqrt{KT},
\qquad
c_{M,\sigma}:=\min\left\{\frac18,\frac{M}{4\sigma}\right\}.
\label{eq:standard_minimax_lower_bound}
\end{equation}
{See Section~\ref{sec:proof_gaussian_minimax_bound} for a detailed proof.}
Since $K\ge2$ and $x\mapsto x/(1+x)$ is increasing on $[0,\infty)$, it follows that
$
\mathcal R_{K,T}/(\sigma\sqrt T+\mathcal R_{K,T})
\ge
c_{M,\sigma}\sqrt K/(1+c_{M,\sigma}\sqrt K)
\ge
\sqrt2\,c_{M,\sigma}/(1+\sqrt2\,c_{M,\sigma}).
$
Substituting it into~\eqref{eq:finite_time_product_form} yields the $K$-independent bound
\begin{equation}
\mathcal R_{K,T}\mathcal S_{K,T}
\ge
\frac{3e^{-1/4}\sigma c_{M,\sigma}}
{256\left(1+\sqrt2\,c_{M,\sigma}\right)}
T^{3/2}.
\end{equation}
\end{proof}

\begin{samepage}
\begin{remark}[Comparison of lower-bound conditions]
\label{rem:weaker_conditions}
Choosing an arm with the smallest expected pull count at $\Delta=0$ as the optimal arm in the instance construction yields the lower bound on $G(0)$ needed for~\eqref{eq:telescoping_drop}, which \citet{chenlu2026} obtain from symmetry. Absolute continuity of the Gaussian interaction laws implies $G(\Delta)>0$ for every finite $\Delta$, ensuring that the recursion in~\eqref{eq:delta_chain_definition} is well defined without nondegenerate exploration. Finally, under $\mathcal R_{K,T}\le MT/8$, $\Delta_\star\le M$, while the regret identity gives $G(\Delta_\star)\le T/8$. This admissible endpoint for fixed $T$, together with the lower bound on $G(0)$, yields the order-$T$ decrease in $G$ required by~\eqref{eq:telescoping_drop}, replacing the asymptotic active-learning argument of \citet{chenlu2026}.
\end{remark}
\end{samepage}

{
\subsection{\texorpdfstring{Proof of Lemma~\protect\ref{lem:aggregate_pull_count_comparison}}{Proof of the aggregate pull-count comparison}}
\label{sec:proof_aggregate_pull_count_comparison}}

\begin{proof}[Proof of Lemma~\ref{lem:aggregate_pull_count_comparison}]
The claim is immediate if $G(\Delta)=G(\Delta')$. Otherwise, define an event
\[
\mathcal A:=
\begin{cases}
N_-(T)<\dfrac{G(\Delta)+G(\Delta')}{2},
& G(\Delta)>G(\Delta'),\\[2mm]
N_-(T)\ge\dfrac{G(\Delta)+G(\Delta')}{2},
& G(\Delta)<G(\Delta').
\end{cases}
\]
On $\mathcal A$, $N_-(T)$ is at least
$|G(\Delta')-G(\Delta)|/2$ away from $G(\Delta)$, while on
$\mathcal A^c$ it is at least the same distance from $G(\Delta')$.
Since
$N_-(T)=T-N_j(T)$, it follows that
\begin{align*}
(\mathcal S_{K,T}(\nu(\Delta)))^2\ge \operatorname{Var}_\Delta(N_j(T))
&= \operatorname{Var}_\Delta(N_-(T))
\ge \mathbb P_\Delta(\mathcal A)\frac{|G(\Delta')-G(\Delta)|^2}{4}
.
\end{align*}
Similarly, we have $(\mathcal S_{K,T}(\nu(\Delta')))^2\ge\mathbb P_{\Delta'}(\mathcal A^c)\cdot |G(\Delta')-G(\Delta)|^2/4.$
Only the reward distributions of arms \(i\ne j\) differ between the two instances. The chain rule for KL divergence and the Gaussian KL formula therefore give
\[
D_{\rm KL}(\mathbb P_\Delta\|\mathbb P_{\Delta'})=\sum_{i\neq j}\mathbb E_{\Delta}[N_i(T)]D_{\rm KL}\big(\mathcal N(-\Delta,\sigma^2)\|\mathcal N(-\Delta',\sigma^2)\big)
=G(\Delta)\frac{(\Delta'-\Delta)^2}{2\sigma^2}
\le \frac12.
\]
The Bretagnolle--Huber inequality therefore implies
$
\mathbb P_\Delta(\mathcal A)+\mathbb P_{\Delta'}(\mathcal A^c)
\ge \exp\big(-D_{\rm KL}
(\mathbb P_\Delta\|\mathbb P_{\Delta'})\big)/2\ge e^{-1/2}/2.$ Consequently,
\begin{align*}
\max\left\{
\mathcal S_{K,T}(\nu(\Delta)),
\mathcal S_{K,T}(\nu(\Delta'))
\right\}
&\ge \frac{|G(\Delta')-G(\Delta)|}{4}
\left(\sqrt{\mathbb P_\Delta(\mathcal A)}
+\sqrt{\mathbb P_{\Delta'}(\mathcal A^c)}\right)\\
&\ge \frac{|G(\Delta')-G(\Delta)|}{4}
\sqrt{\mathbb P_\Delta(\mathcal A)
+\mathbb P_{\Delta'}(\mathcal A^c)}\ge \frac{e^{-1/4}}{4\sqrt2}
|G(\Delta')-G(\Delta)|.
\end{align*}
This concludes the proof of Lemma~\ref{lem:aggregate_pull_count_comparison}.
\end{proof}

{
\subsection{\texorpdfstring{Proof of~\protect\eqref{eq:standard_minimax_lower_bound}}{Proof of the Gaussian minimax bound}}
\label{sec:proof_gaussian_minimax_bound}}

\begin{proof}[Proof of the Gaussian minimax bound~\eqref{eq:standard_minimax_lower_bound}]\mbox{}\par
The proof applies the Gaussian minimax construction in Exercise~15.2 of \citet{lattimore2020}, with the variance scaled to \(\sigma^2\) and the means restricted to \([-M,M]\).
Set
\[
c_{M,\sigma}:=\min\left\{\frac18,\frac{M}{4\sigma}\right\},
\qquad
\delta:=4c_{M,\sigma}\sigma\sqrt{\frac KT}.
\]
For each $i\in[K]$, consider the Gaussian instance $\nu^{(i)}$ in which
arm $i$ has mean $\delta$ and all other arms have mean zero, with
variance $\sigma^2$ throughout; let $\nu^{(0)}$ be the all-zero instance.
These instances are admissible because $\delta\le M$.
Writing $\mathbb P_{\nu^{(i)}}$, $\mathbb E_{\nu^{(i)}}$, $R_{\nu^{(i)}}$ for the corresponding probability, expectation and regret under the interaction with instance $\nu^{(i)}$, Pinsker's inequality and the divergence decomposition give
\begin{align}
\mathbb E_{\nu^{(i)}}[N_i(T)]
&\le \mathbb E_{\nu^{(0)}}[N_i(T)]+
T\cdot\operatorname{TV}(\mathbb P_{\nu^{(i)}},\mathbb P_{\nu^{(0)}})
\le \mathbb E_{\nu^{(0)}}[N_i(T)]
+T\sqrt{\frac12D_{\rm KL}(\mathbb P_{\nu^{(0)}}\|\mathbb P_{\nu^{(i)}})}\\
&=\mathbb E_{\nu^{(0)}}[N_i(T)]
+2c_{M,\sigma}\sqrt{KT\,\mathbb E_{\nu^{(0)}}[N_i(T)]},
\end{align}
where $\operatorname{TV}(\mathbb P_{\nu^{(i)}},\mathbb P_{\nu^{(0)}})
=\sup_A|\mathbb P_{\nu^{(i)}}(A)-\mathbb P_{\nu^{(0)}}(A)|.$
Summing over $i$ and applying the Cauchy--Schwarz inequality yields
\[
\sum_{i=1}^K\mathbb E_{\nu^{(i)}}[N_i(T)]\le T+2c_{M,\sigma}KT.
\]
Hence, with $R_{\nu^{(i)}}=\delta(T-\mathbb E_{\nu^{(i)}}[N_i(T)])$,
\begin{align*}
\mathcal R_{K,T}
&\ge \frac1K\sum_{i=1}^K R_{\nu^{(i)}}
\ge 4c_{M,\sigma}\sigma\left(1-\frac1K-2c_{M,\sigma}\right)\sqrt{KT}\ge c_{M,\sigma}\sigma\sqrt{KT}
=\min\left\{\frac{\sigma}{8},\frac{M}{4}\right\}\sqrt{KT},
\end{align*}
where the last inequality uses $K\ge2$ and $c_{M,\sigma}\le1/8$.
\end{proof}

\section{Conclusion}
\label{sec:conclusion}

We characterize the finite-time regret--instability trade-off for \(K\)-armed bandits. Our lower bound has a constant independent of \(K\), while the tunable \textup{\textsc{SLE-UCB}} algorithm achieves a regret--instability product of \(O(T^{3/2}\log K)\). These bounds match in their \(T^{3/2}\) dependence and up to a logarithmic factor in \(K\), thereby characterizing the frontier to logarithmic accuracy. In adaptive experimentation and dynamic resource allocation, the tuning parameter lets a decision maker choose how much regret to accept for more reproducible allocations, while the lower bound shows that this trade-off is unavoidable within the model. It would be interesting to investigate whether these ideas extend to contextual bandits and reinforcement learning.

\bibliographystyle{ims}
\bibliography{graphbib}

@string{COLT  = {Conference on Learning Theory}}

@article{chenlu2026,
  author = {Chen, Yilun and Lu, Jiaqi},
  title = {Bandit Allocational Instability},
  journal = {arXiv preprint arXiv:2602.07472},
  year = {2026}
}

@book{lattimore2020,
  title={Bandit algorithms},
  author={Lattimore, Tor and Szepesv{\'a}ri, Csaba},
  year={2020},
  publisher={Cambridge University Press}
}

@incollection{boucheron2013concentration,
    author = {Boucheron, Stéphane and Lugosi, Gábor and Massart, Pascal},
    isbn = {9780199535255},
    title = {Bounding the Variance},
    booktitle = {Concentration Inequalities: A Nonasymptotic Theory of Independence},
    publisher = {Oxford University Press},
    pages = {52--82},
    year = {2013},
    month = {02},
}

@article{efron1981jackknife,
author = {B. Efron and C. Stein},
title = {{The Jackknife Estimate of Variance}},
volume = {9},
journal = {The Annals of Statistics},
number = {3},
publisher = {Institute of Mathematical Statistics},
pages = {586 -- 596},
year = {1981},
}

@article{praharaj2025instability,
  title={On instability of minimax optimal optimism-based bandit algorithms},
  author={Praharaj, Samya and Khamaru, Koulik},
  journal={arXiv preprint arXiv:2511.18750},
  year={2025}
}

@article{kalvit2021closer,
  title={A closer look at the worst-case behavior of multi-armed bandit algorithms},
  author={Kalvit, Anand and Zeevi, Assaf},
  journal={Advances in Neural Information Processing Systems},
  volume={34},
  pages={8807--8819},
  year={2021}
}

@article{khamaru2024inference,
  title={Inference with the upper confidence bound algorithm},
  author={Khamaru, Koulik and Zhang, Cun-Hui},
  journal={arXiv preprint arXiv:2408.04595},
  year={2024}
}

@article{han2024ucb,
  title={UCB algorithms for multi-armed bandits: Precise regret and adaptive inference},
  author={Han, Qiyang and Khamaru, Koulik and Zhang, Cun-Hui},
  journal={arXiv preprint arXiv:2412.06126},
  year={2024}
}

@article{fan2022typical,
  title={The typical behavior of bandit algorithms},
  author={Fan, Lin and Glynn, Peter W},
  journal={arXiv preprint arXiv:2210.05660},
  year={2022}
}

@article{halder2025stable,
  title={Stable thompson sampling: Valid inference via variance inflation},
  author={Halder, Budhaditya and Pan, Shubhayan and Khamaru, Koulik},
  journal={arXiv preprint arXiv:2505.23260},
  year={2025}
}

@article{fan2026precise,
  title={Precise asymptotics and refined regret of variance-aware ucb},
  author={Fan, Yingying and Han, Yuxuan and Lv, Jinchi and Xu, Xiaocong and Zhou, Zhengyuan},
  journal={Advances in Neural Information Processing Systems},
  volume={38},
  pages={160795--160837},
  year={2025}
}

@article{lai1982least,
  title={Least squares estimates in stochastic regression models with applications to identification and control of dynamic systems},
  author={Lai, Tze Leung and Wei, Ching Zong},
  journal={The Annals of Statistics},
  pages={154--166},
  year={1982},
  publisher={JSTOR}
}

@misc{halder2026stabilizingbanditsusingregularization,
      title={Stabilizing Bandits using Regularization: Precise Regret and A Quantitative Central Limit Theorem}, 
      author={Budhaditya Halder and Ishan Sengupta and Koustav Chowdhury and Samya Praharaj and Koulik Khamaru},
      year={2026},
      eprint={2603.10184},
      archivePrefix={arXiv},
      primaryClass={stat.ML},
}

@article{yan2026optimism,
  title={Optimism Stabilizes Thompson Sampling for Adaptive Inference},
  author={Yan, Shunxing and Zhong, Han},
  journal={arXiv preprint arXiv:2602.06014},
  year={2026}
}

@book{gittins2011,
  title={Multi-armed bandit allocation indices},
  author={Gittins, John and Glazebrook, Kevin and Weber, Richard},
  year={2011},
  publisher={John Wiley \& Sons}
}

@article{villar2015,
  title={Multi-armed bandit models for the optimal design of clinical trials: benefits and challenges},
  author={Villar, Sof{\'\i}a S and Bowden, Jack and Wason, James},
  journal={Statistical science: a review journal of the Institute of Mathematical Statistics},
  volume={30},
  number={2},
  pages={199},
  year={2015}
}

@inproceedings{li2010,
  title={A contextual-bandit approach to personalized news article recommendation},
  author={Li, Lihong and Chu, Wei and Langford, John and Schapire, Robert E},
  booktitle={Proceedings of the 19th international conference on World wide web},
  pages={661--670},
  year={2010}
}

@inproceedings{audibert2009minimax,
  title={Minimax policies for adversarial and stochastic bandits},
  author={Audibert, Jean-Yves and Bubeck, S{\'e}bastien},
  booktitle={Colt},
  pages={217--226},
  year={2009}
}

@article{fan2025fragility,
  title={The fragility of optimized bandit algorithms},
  author={Fan, Lin and Glynn, Peter W},
  journal={Operations Research},
  volume={73},
  number={6},
  pages={3173--3198},
  year={2025},
  publisher={INFORMS}
}

@article{simchilevi2022simple,
  title={A simple and optimal policy design for online learning with safety against heavy-tailed risk},
  author={Simchi-Levi, David and Zheng, Zeyu and Zhu, Feng},
  journal={Advances in Neural Information Processing Systems},
  volume={35},
  pages={33795--33805},
  year={2022}
}

@article{simchilevi2023tradeoff,
  title={Stochastic multi-armed bandits: Optimal trade-off among optimality, consistency, and tail risk},
  author={Simchi-Levi, David and Zheng, Zeyu and Zhu, Feng},
  journal={Advances in Neural Information Processing Systems},
  volume={36},
  pages={35619--35630},
  year={2023}
}

@inproceedings{simchilevi2023experimental,
  title={Multi-armed bandit experimental design: Online decision-making and adaptive inference},
  author={Simchi-Levi, David and Wang, Chonghuan},
  booktitle={International Conference on Artificial Intelligence and Statistics},
  pages={3086--3097},
  year={2023},
  organization={PMLR}
}

@article{chenlu2025adaptivity,
  author = {Chen, Yilun and Lu, Jiaqi},
  title = {A Characterization of Sample Adaptivity in UCB Data},
  journal = {arXiv preprint arXiv:2503.04855},
  year = {2025}
}

@article{han2026thompson,
  title={Thompson sampling: Precise arm-pull dynamics and adaptive inference},
  author={Han, Qiyang},
  journal={arXiv preprint arXiv:2601.21131},
  year={2026}
}

@article{zhu2026varianceinflation,
  title={Adaptive variance inflation in thompson sampling: Efficiency, safety, robustness, and beyond},
  author={Zhu, Feng and Simchi-Levi, David},
  journal={Advances in Neural Information Processing Systems},
  volume={38},
  pages={50466--50484},
  year={2025}
}

@article{auer2002finite,
  title={Finite-time analysis of the multiarmed bandit problem},
  author={Auer, Peter and Cesa-Bianchi, Nicolo and Fischer, Paul},
  journal={Machine learning},
  volume={47},
  number={2},
  pages={235--256},
  year={2002},
  publisher={Springer}
}

\end{document}